\newtheorem{theorem}{Theorem}
\newtheorem{lemma}[theorem]{Lemma}
\newtheorem{corollary}[theorem]{Corollary}
\newtheorem{problem}{Problem}
\newtheorem{claim}{Claim}
\title{Exploring $k$ out of Top $\rho$ Fraction of Arms in Stochastic Bandits}
\author{Wenbo Ren$^1$, Jia Liu$^2$, Ness Shroff$^1$\\
	1: Department of Computer Science, Ohio State University, Columbus, OH 43210\\
	2: Department of Computer Science, Iowa State University, Ames, Iowa 50011\\
	ren.453@osu.edu, jialiu@iastate.edu, shroff.11@osu.edu}
\begin{document}
	\twocolumn[

\aistatstitle{Exploring $k$ out of Top $\rho$ Fraction of Arms in Stochastic Bandits}

\aistatsauthor{ Wenbo Ren \And Jia Liu \And Ness B. Shroff}
\aistatsaddress{
	Dept. Computer Science \& Engineering\\
	The Ohio State University\\
	ren.453@osu.edu
	\And Dept. Computer Science\\
	Iowa State University\\
	jialiu@iastate.edu 
	\And 
	Dept. Computer Science \& Engineering\\
	The Ohio State University\\
	shroff.11@osu.edu}
]

\begin{abstract}
	This paper studies the problem of identifying any $k$ distinct arms among the top $\rho$ fraction (e.g., top 5\%) of arms from a finite or infinite set with a probably approximately correct (PAC) tolerance $\epsilon$. We consider two cases: (i) when the threshold of the top arms' expected rewards is known and (ii) when it is unknown. We prove lower bounds for the four variants (finite or infinite arms, and known or unknown threshold), and propose algorithms for each. Two of these algorithms are shown to be sample complexity optimal (up to constant factors) and the other two are optimal up to a log factor. Results in this paper provide up to $\rho n/k$ reductions compared with the ``$k$-exploration'' algorithms that focus on finding the (PAC) best $k$ arms out of $n$ arms. We also numerically show improvements over the state-of-the-art.
\end{abstract}

\section{INTRODUCTION}
\textbf{Background.} Multi-armed bandit (MAB) problems \citep{EarlyBandit1985} have been studied for decades, and well abstract the problems of decision making with uncertainty. It has been widely applied to many areas, e.g., online advertising \citep{OnlineAdvertising2010}, clinical trials \citep{ClinicalTrials1995}, networking \citep{AdaptiveRouting2012,NetworkingLiu2017}, and pairwise ranking \citep{LimitedAdaptivity2017}. In this paper, we focus on stochastic multi-armed bandit. In this setting, each arm of the bandit is assumed to hold a distribution. Whenever the decision maker samples this arm, an independent instance of this distribution is returned. The decision maker adaptively chooses some arms to sample in order to achieve some specific goals. So far, the majority of works in this area has been focused on minimizing the \textsl{regret} (deviation from optimum), (e.g., \citep{UCBRegret2002,UCBRevisited2010,ChernoffInformation2011,AdaptiveRouting2012,TompsonSampling2012,UCBoost2018}) that addresses the trade-off between the exploration and exploitation of arms to minimize the regret.

Instead of regret minimization, this paper focuses on pure exploration problems, which aim either (i) to identify one or multiple arms satisfying specific conditions (e.g., with the highest expected rewards) and try to minimize the number of samples taken (e.g., \citep{FKLowerBound2004,Halving2010,FULowerBound2012,OnTopK2015,LimitedAdaptivity2017,KLLUCB2013,Q-IK2013,Q-IU2017,QuantileExploration2018}), or (ii) to identify one or multiple best possible arms according to a given criteria within a fixed number of samples (e.g., \citep{LimitedRounds2010,SimpleRegret2011,SimpleRegret2015}). In some applications such as product testing \citep{ProductTesting2009,LimitedRounds2010,ProductTesting2010}, before the products are launched, rewards are insignificant, and it is
more interesting to explore the best products with the
least cost, which suggests the pure exploration setting. This paper focuses on (i) above.

We investigate the problem of identifying any $k$ arms that are in the top $\rho$ fraction of the expected rewards of the arm set. This is in contrast to most works in the pure exploration paradigm that focus on the problem of identifying $k$ best arms of a given arm set. We name the former as the ``quantile exploration" (QE) problem, and the latter as the ``$k$-exploration" (KE) problem, respectively. The motivations of studying the QE problem are as follows: First, in many applications, it is not necessary to identify the best arms, since it is acceptable to find ``good enough'' arms. For instance, a company wants to hire 100 employees from more than 10,000 applicants. It may be costly to find the best 100 applicants, and may be acceptable to identify 100 within a certain top percentage (e.g., 5\%); Second, theoretical analysis \citep{FULowerBound2012,FKLowerBound2004} shows that the lower bound on the sample complexity (aka, number of samples taken) of the KE problem depends on $n$. When the number of arms is extremely large or possibly infinite (e.g., the problem of finding a ``good" size from $[10mm,12mm]$ for a new model of devices), it is infeasible to find the best arms, but may be feasible to find arms within a certain top quantile; Third, by adopting the QE setting, we replace the sample complexity's dependence on $n$ of the KE problem with ${k}/{\rho}$ \citep{Q-IU2017}, which can be much smaller, and can greatly reduce the number of samples needed to find ``good" arms.

This paper adopts the probably approximately correct (PAC) setting, where an $\epsilon$ bounded error is tolerated. This setting can avoid the cases where arms are too close---making the number of samples needed extremely large. The PAC setting has been adopted by numerous previous works \citep{FKLowerBound2004,FULowerBound2012,Halving2010,OnTopK2015,Q-IK2013,Q-IU2017,QuantileExploration2018,KLLUCB2013}.

\textbf{Model and Notations:} Let $\mathcal{S}$ be the set of arms. It can be finite or infinite. When $\mathcal{S}$ is finite, let $n$ be its size, and the top $\rho$ fraction arms are simply the top $\lfloor\rho n\rfloor$ arms. If $\mathcal{S}$ is infinite, we assume that the arms' expected rewards follow some unknown prior identified by an unknown cumulative distribution function (CDF) $\mathcal{F}$. $\mathcal{F}$ is not necessarily continuous. In this paper, we assume the rewards of the arms are of the same finite support, and normalize them into $[0,1]$. For an arm $a$, we use $R^t_a$ to denote the reward of its $t$-th sample. $(R^t_a,t\in\mathbb{Z}^+)$ are identical and independent. We also assume that the samples are independent across time and arms. For any arm $a$, let $\mu_a$ be its expected reward, i.e., $\mu_a:=\mathbb{E}R^1_a$. To formulate the problem, for any $\rho\in(0,1)$, we define the inverse of $\mathcal{F}$ as
\begin{align}\label{ICDF}
\mathcal{F}^{-1}(p):=\sup\{x:\mathcal{F}(x)\leq p\}.
\end{align} 
The inverse $\mathcal{F}^{-1}$ has the following two properties (\ref{ICDF1}) and (\ref{ICDF2}), where $X\sim\mathcal{F}$ means that $X$ is a random variable following the distribution defined by $\mathcal{F}$.
\begin{gather}
\mathcal{F}(\mathcal{F}^{-1}(p))\geq p,\label{ICDF1}\\
\mathbb{P}_{X\sim\mathcal{F}}\{X\geq \mathcal{F}^{-1}(p)\}\geq 1-p.\label{ICDF2}
\end{gather}
To see (\ref{ICDF1}), by contradiction, suppose $\mathcal{F}(\mathcal{F}^{-1}(p))<p$. Since $\mathcal{F}(x)$ is right continuous, there exists a number $x_1$ such that $x_1>\mathcal{F}^{-1}(p)$ and $\mathcal{F}(x_1)<p$. This implies that $x_1$ is in $\{x:\mathcal{F}(x)\leq p\}$, and thus contradicting (\ref{ICDF}). Define $\mathcal{G}(x):=\mathbb{P}_{X\sim\mathcal{F}}\{X\geq x\}$. Similar to (\ref{ICDF1}), the left continuity of $\mathcal{G}$ implies (\ref{ICDF2}).

In the finite-armed case, an arm $a$ is said to be $(\epsilon, m)$-optimal if $\mu_a+\epsilon\geq\lambda_{[m]}$, where $\lambda_{[m]}$ is defined as the $m$-th largest expected reward among all arms in $\mathcal{S}$. In other words, the expected reward of an $(\epsilon, m)$-optimal arm plus $\epsilon$ is no less than $\lambda_{[m]}$. The QE problem is to find $k$ distinct $(\epsilon, m)$-optimal arms of $\mathcal{S}$. We consider both cases where $\lambda_{[n]}$ is known and unknown. 

Given a set $\mathcal{S}$ of size $n$, $k\in\mathbb{Z}^+$ and $\epsilon,\delta\in(0,\frac{1}{2})$, we define the two finite-armed QE problems Q-FK (Quantile, Finite-armed, $\lambda_{[m]}$ Known) and Q-FU (Quantile, Finite-armed, $\lambda_{[m]}$ Unknown) as follows:
\begin{problem}[\textbf{Q-FK}]\label{Q-FK}
	With known $\lambda_{[m]}$, we want to find $k$ distinct $(\epsilon, m)$-optimal arms with at most $\delta$ error probability, and use as few samples as possible.
\end{problem}
\begin{problem}[\textbf{Q-FU}]\label{Q-FU}
	Without knowing $\lambda_{[m]}$, we want to find $k$ distinct $(\epsilon, m)$-optimal arms with at most $\delta$ error probability, and use as few samples as possible.
\end{problem}

In the infinite-armed case, an arm is said to be $[\epsilon, \rho]$-optimal if its expected reward is no less than $\mathcal{F}^{-1}(1-\rho)-\epsilon$. Here, we use brackets to avoid ambiguity. To simplify notation, we define $\lambda_\rho:=\mathcal{F}^{-1}(1-\rho)$. An $[\epsilon,\rho]$-optimal arm is within the top $\rho$ fraction of $\mathcal{S}$ with an at most $\epsilon$ error. We consider both cases where $\lambda_{\rho}$ is known and unknown. Note that in both cases, we have no knowledge on $\mathcal{F}$ except that $\lambda_\rho$ is possibly known.

Given a set $\mathcal{S}$ of infinite number of arms, $k\in\mathbb{Z}^+$, and $\rho, \delta, \epsilon \in (0,1/2)$, we define the two infinite-armed QE problems Q-IK (Quantile, Infinite-armed, $\lambda_\rho$ Known) and Q-IU (Quantile, Infinite-armed, $\lambda_\rho$ Unknown).
\begin{problem}[\textbf{Q-IK}]\label{Q-IK}
	Knowing $\lambda_\rho$, we want to find $k$ distinct $[\epsilon, \rho]$-optimal arms with error probability no more than $\delta$, and use as few samples as possible.
\end{problem}
\begin{problem}[\textbf{Q-IU}]\label{Q-IU}
	Without knowing $\lambda_\rho$, we want to find $k$ distinct $[\epsilon, \rho]$-optimal arms with error probability no more than $\delta$, and use as few samples as possible.
\end{problem}

\section{RELATED WORKS}\label{sec:RW}
\begin{table*}[t]
	\caption{Comparison of Previous Works and Ours. All Bounds Are for the Worst Instances.}\label{tabasdf}
	\centering
	\begin{tabular}{lll}
		\multicolumn{1}{l}{\textbf{PROBLEM}} & \multicolumn{1}{l}{\textbf{WORK}} & \multicolumn{1}{l}{\textbf{SAMPLE COMPLEXITY}}\\ 
		
		\hline
		\multirow{3}{*}{Q-IK} 
		&\multirow{2}{*}{\citet{Q-IK2013}} 	
		&\multicolumn{1}{l}{$O\left(\frac{1}{\rho\epsilon^2}\log\frac{1}{\delta}\right)$ for $k=1$} \\ 
		& &\multicolumn{1}{l}{$\Omega\left(\frac{1}{\epsilon^2}\left(\frac{1}{\rho}+\log\frac{1}{\delta}\right)\right)$ for $k=1$} \\
		\cline{2-3}
		&\multicolumn{1}{l}{\textbf{This Paper}} 	
		&\multicolumn{1}{l}{$\Theta\left(\frac{k}{\epsilon^2}\left(\frac{1}{\rho}+\log\frac{k}{\delta}\right)\right)$ for $k\in\mathbb{Z}^+$} \\

		\hline
		\multirow{3}{*}{Q-FK} 
		&\multirow{1}{*}{\citet{Q-IK2013}} 	
		&\multicolumn{1}{l}{$O\left(\frac{m}{n\epsilon^2}\log\frac{1}{\delta}\right)$ for $k=1$} \\ 
		\cline{2-3}
		&\multirow{2}{*}{\textbf{This Paper}} 
		&\multicolumn{1}{l}{$O\left(\frac{1}{\epsilon^2}\left(n\log\frac{m+1}{m+1-k}+k\log\frac{k}{\delta}\right)\right)$ for $k\leq m\leq n/2$}\\
		& &\multicolumn{1}{l}{$\Omega\left(\frac{k}{\epsilon^2}\left(\frac{n}{m}+\log\frac{k}{\delta}\right)\right)$ for $k\leq m\leq n/2$} \\ 
		
		\hline
		\multirow{3}{*}{Q-IU} 
		&Chaudhuri et al. (2017) & $O\left(\frac{1}{\rho\epsilon^2}\log^2\frac{1}{\delta}\right)$ for $k=1$\\
		& and \citet{QuantileExploration2018} & $\Omega\left(\frac{1}{\rho\epsilon^2}\log\frac{1}{\delta}\right)$ for $k=1$\\
		\cline{2-3}
		&\multirow{2}{*}{\textbf{This Paper}} 	
		&\multicolumn{1}{l}{$O\left(\frac{1}{\epsilon^2}\left(\frac{1}{\rho}\log^2\frac{1}{\delta}+ k \left(\frac{1}{\rho} + \log\frac{k}{\delta} \right) \right)\right)$ for $k\in\mathbb{Z}^+$} \\
		& &\multicolumn{1}{l}{$\Omega\left(\frac{1}{\epsilon^2}\left(\frac{1}{\rho}\log\frac{1}{\delta}+ k\left(\frac{1}{\rho} + \log\frac{k}{\delta}\right)\right)\right)$ for $k\in\mathbb{Z}^+$} \\
		
		\hline
		\multirow{5}{*}{Q-FU} 
		&\multirow{2}{*}{Chaudhuri et al. (2017)} & $O\left(\frac{n}{m\epsilon^2}\log^2\frac{1}{\delta}\right)$ for $k=1$\\
		& & $\Omega\left(\frac{n}{m\epsilon^2}\log\frac{1}{\delta}\right)$ for $k=1$\\
		\cline{2-3}
		
		&\multicolumn{1}{l}{\citet{QuantileExploration2018}} & $O\left(\frac{n}{m\epsilon^2}\log^2\frac{1}{\delta}\right)$ for $k=1$\\
		\cline{2-3}
		
		&\multirow{2}{*}{\textbf{This Paper}} 	
		&\multicolumn{1}{l}{$O\left(\frac{1}{\epsilon^2}\left(\frac{n}{m}\log^2\frac{1}{\delta}+n\log\frac{m+2}{m+2-2k}+k\log\frac{k}{\delta}\right)\right)$ for $2k< m\leq n/2$} \\ 
		& &\multicolumn{1}{l}{$\Omega\left(\frac{1}{\epsilon^2}\left(\frac{n}{m}\log\frac{1}{\delta}+ k\left(\frac{n}{m} + \log\frac{k}{\delta}\right)\right)\right)$ for $k\leq m\leq n/2$} \\ 
		\hline
	\end{tabular}
\end{table*}

To our best knowledge, \citet{Q-IK2013} was the first one who has focused on the QE problems. They derived the tight lower bound $\Omega(\frac{1}{\epsilon^2}(\frac{1}{\rho}+\log\frac{1}{\delta}))$\footnote{All $\log$, unless explicitly noted, are natural $\log$.} for the Q-IK problem with $k=1$. They also provided a Q-IK algorithm for $k=1$, with sample complexity $O(\frac{1}{\rho\epsilon^2}\log\frac{1}{\delta})$, higher than the lower bound roughly by a $\log\frac{1}{\delta}$ factor. In contrast, our Q-IK algorithm works for all $k$-values and matches the lower bound.

\citet{Q-IU2017} studied the Q-IU and Q-FU problems with $k=1$. They derived the lower bounds for $k=1$. In this paper, we generalize their lower bounds to cases where $k>1$. They also proposed algorithms for these two problems with $k=1$, and the upper bounds ($O(\frac{1}{\rho\epsilon^2}\log^2\frac{1}{\delta})$ for Q-IK, $O(\frac{n}{m\epsilon^2}\log^2\frac{1}{\delta})$ for Q-FK) are the same as ours. For $k>1$, by simply repeating their algorithms $k$ times and setting error probability $\frac{\delta}{k}$ for each repetition, one can solve the two problems with sample complexity $O(\frac{k}{\rho\epsilon^2}\log^2\frac{k}{\delta})$ and $O(\frac{n}{mk\epsilon^2}\log^2\frac{k}{\delta})$, respectively. This paper proposes new algorithms for all $k$-values with $\log\frac{k}{\delta}$ reductions over the sample complexity.

\citet{QuantileExploration2018} studied the Q-IU problem. They proposed a Q-IK algorithm which is higher than the lower bound proved in this paper by a $\log\frac{1}{\rho\delta}$ factor in the worst case. Under some ``good" priors, its theoretical sample complexity can be lower than ours. However, numerical results in this paper show that our algorithm still obtains improvement under ``good" priors.

Although the KE problem is not the focus of this paper, we provide a quick overview for comparative perspective. An early attempt on the KE problem was done by \citet{MedianElimination2002}, which proposed an algorithm called Median-Elimination that finds an $(\epsilon,1)$-optimal arm with probability at least $1-\delta$ by using at most $O(\frac{n}{\epsilon^2}\log\frac{1}{\delta})$ samples. \citet{FKLowerBound2004,FULowerBound2012,Halving2010,LimitedAdaptivity2017,OnTopK2015,LIL2014,GCL2016,KLLUCB2013} studied the KE problem in different settings. Halving algorithm proposed by \citet{Halving2010} finds $k$ distinct $(\epsilon,k)$-optimal arms with probability at least $1-\delta$ by using $O(\frac{n}{\epsilon^2}\log\frac{k}{\delta})$ samples. \citet{Halving2010,FULowerBound2012,LIL2014,Q-IU2017,QuantileExploration2018,KLLUCB2013} used confidence bounds to establish algorithms that can exploit the large gaps between the arms. In practice, these algorithms are promising, while in theory, their sample complexities may be higher than the lower bound by $\log$ factors.

\section{LOWER BOUND ANALYSIS}\label{sec:LBA}
We first establish the Q-FK lower bound.
\begin{theorem}[Lower bound for Q-FK]\label{LB-Q-FK}
	Given $k\leq m\leq n/2$, $\epsilon\in(0,\frac{1}{4})$, and $\delta\in(0,e^{-8}/40)$, there is a set such that to find $k$ distinct $(\epsilon, m)$-optimal arms of it with error probability at most $\delta$, any algorithm must take $\Omega(\frac{k}{\epsilon^2}(\frac{n}{m} + \log{\frac{k}\delta}))$ samples in expectation.
\end{theorem}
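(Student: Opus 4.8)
The plan is to exhibit a hard family of instances and to lower bound the expected number of samples by each of the two summands $\frac{kn}{m\epsilon^2}$ and $\frac{k}{\epsilon^2}\log\frac{k}{\delta}$ separately; since $\frac{k}{\epsilon^2}(\frac{n}{m}+\log\frac{k}{\delta})$ is within a factor $2$ of the larger of the two, establishing both on near-threshold instances and keeping the worse one suffices. The instances use Bernoulli arms clustered around the known threshold $\lambda_{[m]}=\theta$: a uniformly random size-$m$ subset $G\subseteq[n]$ of ``good'' arms at mean $\theta$ (which are $(\epsilon,m)$-optimal) and the remaining ``bad'' arms at mean $\theta-2\epsilon$ (which are not, as $\theta-2\epsilon<\theta-\epsilon$). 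This pins $\lambda_{[m]}=\theta$, leaves no trivially-identifiable good arms, and makes every acceptance decision a separation of two Bernoullis at distance $\epsilon$ from the boundary, with $\mathrm{KL}(\mathrm{Ber}(\theta)\|\mathrm{Ber}(\theta-2\epsilon))=O(\epsilon^2)$. The main tool is the transportation inequality: for a $\delta$-correct algorithm and instances $\nu,\nu'$ demanding different answers, $\sum_a \mathbb{E}_\nu[N_a]\,\mathrm{KL}(\nu_a\|\nu'_a)\ge \mathrm{kl}\big(\mathbb{P}_\nu(\mathcal{E}),\mathbb{P}_{\nu'}(\mathcal{E})\big)$ for a suitable event $\mathcal{E}$, where $N_a$ counts pulls of arm $a$.

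For the search bound I would call an arm \emph{tested} once it has been pulled $c/\epsilon^2$ times. The first step shows an untested arm is nearly uninformative: conditioned on the rest of the transcript, the posterior that it lies in $G$ stays within a constant factor of the prior $\approx m/n$, since $c/\epsilon^2$ pulls separate the good and bad laws by only $O(c)$ in KL. Hence the algorithm cannot profitably output untested arms and must test at least $k$ good ones. The second step is a hitting-time argument: each freshly tested arm is good with probability only $\approx m/n$, so in expectation $\Omega(kn/m)$ arms must be tested before $k$ of them are good, and each test costs $c/\epsilon^2$ pulls, giving $\Omega(\frac{kn}{m\epsilon^2})$.

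For the confidence bound the goal is to show that, whichever $k$ arms the algorithm outputs, each must be certified against the $2\epsilon$-close bad alternative. Writing $p_a$ for the probability that an output arm $a$ is wrongly kept, a change of measure that lowers $a$ to mean $\theta-2\epsilon$ (while compensating elsewhere to keep $\lambda_{[m]}=\theta$) should force $\mathbb{E}[N_a]\gtrsim\frac{1}{\epsilon^2}\log\frac{1}{p_a}$. Because the algorithm must be correct on all $k$ outputs with total error at most $\delta$, the per-arm errors obey $\sum_a p_a\lesssim\delta$, and minimizing $\sum_a\log\frac{1}{p_a}$ under this constraint is, by convexity, attained at the even split $p_a=\delta/k$, yielding $\sum_a\log\frac{1}{p_a}\ge k\log\frac{k}{\delta}$ and hence $\Omega(\frac{k}{\epsilon^2}\log\frac{k}{\delta})$.

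I expect the principal obstacle to be \emph{forcing} these $k$ certifications when good arms are abundant (for $m\gg k$ there are many acceptable arms) and the output set is chosen adaptively: lowering one output arm below threshold also lowers $\lambda_{[m]}$ and rescues that arm, so the perturbation must simultaneously promote another arm to preserve $\theta$, and the algorithm's freedom to output the promoted arm instead must be neutralized by a symmetry/averaging argument over which arm is promoted. The secondary difficulty, shared with the search bound, is transferring the expectation of $N_a$ across the change of measure and defending the tested/untested dichotomy against algorithms that interleave partial samples and adaptive stopping; both are handled by truncating the counts at $O(1/\epsilon^2)$ and bounding the resulting likelihood-ratio change. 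This is also where the hypotheses $\epsilon<\frac14$, $\delta<e^{-8}/40$, and $m\le n/2$ enter, keeping the Bernoulli KL estimates, the $\mathrm{kl}(1-\delta,\cdot)\gtrsim\log\frac1\delta$ bound, and the posterior constant simultaneously in a regime where the inequalities compose. Finally I would take the larger of the two bounds to recover $\Omega\big(\frac{k}{\epsilon^2}(\frac{n}{m}+\log\frac{k}{\delta})\big)$.
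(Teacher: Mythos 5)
Your two-part architecture (prove the $\frac{kn}{m\epsilon^2}$ and $\frac{k}{\epsilon^2}\log\frac{k}{\delta}$ terms separately and keep the larger) is fine, and the search bound via the tested/untested dichotomy is the standard argument and can be made rigorous. The genuine gap is in the certification term $\Omega(\frac{k}{\epsilon^2}\log\frac{k}{\delta})$, and it sits exactly at the obstacle you yourself flagged --- your proposed fix does not close it. The constraint $\sum_a p_a\lesssim\delta$ is unjustified: each $p_a$ is the probability of outputting $a$ under a \emph{different} alternative instance $\nu'_a$ (demote $a$, promote a compensating arm), so $\delta$-correctness gives only $p_a\le\delta$ for each $a$ separately; there is no single measure under which these events union into one error event of probability at most $\delta$, hence no route to $\sum_a p_a\le\delta$. (If you instead define $p_a$ under the true instance as $\mathbbm{P}_\nu\{a\ \text{output},\ a\ \text{bad}\}$, then $\sum_a p_a=\mathbbm{E}[\text{number of bad outputs}]\le k\delta$, which only yields $k\log\frac{1}{\delta}$.) Worse, on your exchangeable instance the transportation inequality is diluted: for a fixed good set $G$ one has $\sum_{a\in G}\mathbbm{P}_\nu\{a\ \text{output}\}\le k$, so a typical good arm is output with probability only about $k/m$, and summing $\mathrm{kl}(q_a,\delta)$ over $a\in G$ gives, by convexity, only about $k\log\frac{k}{m\delta}$ --- vacuous once $m\gtrsim k/\delta$. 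The symmetrization over which arm is demoted/promoted is precisely what produces this dilution; it cannot cure it. So with your witness instance and per-arm alternatives, the argument proves something strictly weaker than claimed.

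The paper's proof removes both problems structurally, and that is the idea your proposal is missing. For the base case $k=1$ it does not use $m$ independent good arms at all: it reduces from Theorem~13 of \citep{FKLowerBound2004} by \emph{duplicating} each of $\lfloor n/m\rfloor$ coins into $m$ copies (plus fake arms), so the good arms are literal copies of one coin --- certifying any copy certifies the coin, and no dilution across interchangeable good arms can occur. For $k>1$ it assembles $L$ disjoint, statistically independent hard sub-instances, each demanding its own answer; the per-block error probabilities then satisfy $\prod_i(1-\delta_i)\ge 1-\delta$, and the convexity step you wanted, $\sum_i\log\frac{1}{\delta_i}=\Omega(L\log\frac{L}{\delta})$, is applied to that constraint, where the blocks' independence makes it meaningful. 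A coupon-collector phase analysis then converts any too-fast Q-FK algorithm into a too-fast solver of the multi-block problem, yielding the contradiction. To salvage your direct single-instance route you would need (i) a correlation device playing the role of duplication and (ii) a legitimate replacement for the error-allocation step, e.g., the block structure itself or a proof that the $k$ goodness events are nearly independent conditionally on the transcript; neither is present in the proposal as written.
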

\textit{Proof Sketch.}
\citet[Theorem~13]{FKLowerBound2004} showed that for the worst instance, finding an $(\epsilon, 1)$-optimal arm with confidence $1-\delta$ needs $\Omega(\frac{1}{\epsilon^2}(\frac{n}{m}+\log\frac{1}{\delta}))$ samples in expectation. We show that any Q-FK algorithm for $k=1$ can be transformed to find $(\epsilon, 1)$-optimal arms, and derive the desired lower bound for $k=1$. Then, we construct a series of Q-IK problems with $k=1$ that match the lower bound proved above. We show that solving any $k$ of these problems with at most $\delta$ total error probability needs at least $\Omega(\frac{k}{\epsilon^2}(\frac{n}{m} + \log{\frac{k}\delta}))$ samples in expectation. Any algorithm that solves the Q-FK problem with parameter $k$ can be transformed to solve the above problems. The desired lower bound follows.
$\square$

By Theorem~\ref{LB-Q-FK}, we establish the Q-IK lower bound.
\begin{theorem}[Lower bound for Q-IK]\label{LB-Q-IK}
	Given $k$, $\rho\in(0,\frac{1}{2}]$, $\epsilon\in(0,\frac{1}{4})$, and $\delta\in(0,e^{-8}/40)$, there is an infinite set such that to find $k$ distinct $[\epsilon, \rho]$-optimal arms of it with error probability at most $\delta$, any algorithm must take $\Omega(\frac{k}{\epsilon^2}(\frac{1}{\rho}+\log{\frac{k}\delta}))$ samples in expectation.
\end{theorem}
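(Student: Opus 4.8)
The plan is to reduce from the finite lower bound of Theorem~\ref{LB-Q-FK}, exploiting the fact that if one sets the quantile to $\rho=m/n$ then $n/m=1/\rho$, so the finite bound $\Omega(\frac{k}{\epsilon^2}(\frac{n}{m}+\log\frac{k}{\delta}))$ becomes exactly the target $\Omega(\frac{k}{\epsilon^2}(\frac{1}{\rho}+\log\frac{k}{\delta}))$. Given $\rho\in(0,\frac12]$, I would first fix a finite hard instance $F$ from Theorem~\ref{LB-Q-FK} of size $n$ with threshold $m$ chosen so that $m/n=\rho$ and $k\le m\le n/2$. The infinite instance $I$ is then built by letting its prior $\mathcal{F}$ be the distribution of the means of $F$, i.e.\ $I$ consists of infinitely many independent copies of $F$. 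By construction $\lambda_\rho=\mathcal{F}^{-1}(1-\rho)=\lambda_{[m]}$, which is known in $F$, so $I$ respects the known-threshold setting of Q-IK, and an arm of $I$ is $[\epsilon,\rho]$-optimal if and only if the finite arm it copies is $(\epsilon,m)$-optimal.

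For the base case $k=1$ the reduction is clean. I would simulate $I$ to any Q-IK solver by answering each ``draw a fresh arm'' request with a new independent copy of a uniformly random arm of $F$, forwarding every sample query to that finite arm; samples are consumed one for one, so the solver's sample count equals the number of samples taken on $F$. A single $[\epsilon,\rho]$-optimal arm returned by the solver yields a single $(\epsilon,m)$-optimal arm of $F$, and distinctness is vacuous. Hence a Q-IK solver for $I$ solves the $k=1$ Q-FK instance with the same expected sample complexity, and Theorem~\ref{LB-Q-FK} (with $n/m=1/\rho$) forces $\Omega(\frac{1}{\epsilon^2}(\frac{1}{\rho}+\log\frac{1}{\delta}))$.

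For general $k$ I would recover the extra factor $k$ and the $\log\frac{k}{\delta}$ term by a product argument mirroring the second half of the proof of Theorem~\ref{LB-Q-FK}. The key structural fact is that $I$ is i.i.d.\ and its arms are anonymous: samples drawn from arms that are later discarded or returned carry no information about any other arm of the population. Consequently the $k$ required good arms must be located by $k$ statistically independent ``find one good arm'' searches, each subject to the base bound; splitting the error budget so that the overall failure probability stays below $\delta$ costs an extra $\log\frac{k}{\delta}$ per search, and summing the $k$ independent search costs gives the claimed $\Omega(\frac{k}{\epsilon^2}(\frac{1}{\rho}+\log\frac{k}{\delta}))$.

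The hard part will be precisely the distinctness requirement in this last step. A naive black-box replication reduction breaks for $k>1$: the Q-IK solver may return $k$ \emph{distinct} population arms that are all copies of the \emph{same} finite arm of $F$, so one cannot read off $k$ distinct good arms of $F$ and Theorem~\ref{LB-Q-FK} cannot be invoked directly. The resolution I would pursue is to re-derive the product bound natively on the anonymous i.i.d.\ instance $I$ rather than pulling it back to $F$: distinctness is then satisfied for free, since the solver need only return different physical arms, while the hardness is carried entirely by the need for $k$ independent verifications against the known threshold $\lambda_\rho$. Formalizing the ``no shared information across discoveries'' claim (via an exchangeability or change-of-measure argument) together with the accompanying error-budget bookkeeping, and the elementary check that a finite instance with $m/n=\rho$ and $k\le m\le n/2$ exists for the stated ranges of $\rho,\epsilon,\delta$, are the remaining details.
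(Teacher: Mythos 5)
Your reduction is the same one the paper uses: build the infinite instance by drawing with replacement from a finite hard instance of Theorem~\ref{LB-Q-FK} with $\rho=m/n$, so that $\lambda_\rho=\lambda_{[m]}$ and $[\epsilon,\rho]$-optimality coincides with $(\epsilon,m)$-optimality. You also correctly identify the crux: for $k>1$ the solver may return $k$ distinct physical arms that are copies of the same underlying finite arm, so the black-box pullback to Theorem~\ref{LB-Q-FK} seems to break. But your resolution of this crux is where the proposal has a genuine gap. You propose to abandon the pullback and ``re-derive the product bound natively on the anonymous i.i.d.\ instance'' via an unformalized exchangeability or change-of-measure argument. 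That is not a remaining detail --- it is the entire difficulty. Proving that an adaptive algorithm cannot share information across the $k$ discoveries is exactly the content of the multi-instance argument in the paper's proof of Theorem~\ref{LB-Q-FK} (the hard-sets construction and Claims~\ref{LB-P2}--\ref{UP-A3} in the supplementary material), and reproducing that machinery in the infinite-armed setting is a substantial piece of work that your proposal does not carry out; the sentence ``the $k$ required good arms must be located by $k$ statistically independent searches'' is an assertion of the conclusion, not an argument.

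The paper's resolution is much simpler, and it is the idea you are missing: you have freedom in choosing the \emph{size} of the finite hard instance, because Theorem~\ref{LB-Q-FK} holds for every pair $(n,m)$ with $k\leq m\leq n/2$ at the fixed ratio $n/m=1/\rho$. Take $m\geq \frac{k(k-1)}{\delta}$ and $n\geq 2m$ (scaling both keeps the bound equal to $\Omega(\frac{k}{\epsilon^2}(\frac{1}{\rho}+\log\frac{k}{\delta}))$). Run the hypothetical Q-IK solver with error probability $\delta/2$; by exchangeability of the i.i.d.\ draws, the probability that the $k$ returned copies collide on some common underlying arm is at most
\begin{align}
1-\prod_{i=1}^{k}\frac{m+1-i}{m}\;\leq\;\sum_{i=1}^{k}\frac{i-1}{m}\;=\;\frac{k(k-1)}{2m}\;\leq\;\frac{\delta}{2},
\end{align}
so with probability at least $1-\delta$ the reduction does yield $k$ \emph{distinct} $(\epsilon,m)$-optimal arms of the finite set, contradicting Theorem~\ref{LB-Q-FK} directly. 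In short: your setup and your diagnosis of the obstacle are right, but the step you defer to ``formalizing later'' is the theorem's real content under your route, whereas a birthday-problem counting argument with a large enough $m$ closes the gap in three lines.
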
\textit{Proof.}
By contradiction, suppose there is an algorithm $\mathcal{A}$ that solves all instances of the Q-IK problem by using $o(\frac{k}{\epsilon^2}(\frac{1}{\rho}+\log{\frac{k}\delta}))$ samples in expectation. Choosing $m\geq {k(k-1)}/{\delta}$ and $n\geq 2m$, we construct an $n$-sized set $\mathcal{C}$ that meets the lower bound of the Q-FK problem. By drawing arms from $\mathcal{C}$ with replacement, we can apply $\mathcal{A}$ to it with $\rho=\frac{m}{n}$. Now, we use $\mathcal{A}$ to find $k$ possibly duplicated $(\epsilon, m)$-optimal arms of $\mathcal{C}$ with error probability $\delta/2$. The probability that there is no duplication in these $k$ found arms is at least $\prod_{i=1}^k{\frac{m+1-i}{m}}\geq 1-\sum_{i=1}^k{\frac{i-1}{m}}\geq 1-\frac{\delta}{2}$. Thus, with probability at least $1-\delta$, $\mathcal{A}$ finds $k$ distinct $(\epsilon, m)$-optimal arms of $\mathcal{C}$ by $ o(\frac{k}{\epsilon^2}(\frac{n}{m}+\log{\frac{k}\delta}))$ samples in expectation, contradicting Theorem~\ref{LB-Q-FK}. The proof is complete.
$\square$

The lower bound for the Q-FU problem directly follows from Theorem~3.3 of \citep{Q-IU2017} and Theorem~\ref{LB-Q-FK}. Theorem~3.3 \citep{Q-IU2017} gives an $\Omega(\frac{n}{m\epsilon^2}\log\frac{1}{\delta})$ lower bounds for $k=1$. Corollary~\ref{LB-Q-FU} applies for all $k$. 
\begin{corollary}[Lower bound for Q-FU]\label{LB-Q-FU}
	Given $k\leq m\leq n/2$, $\epsilon\in(0,1/\sqrt{32})$, and $\delta\in(0,e^{-8}/40)$, there is a set such that to find $k$ distinct $(\epsilon, m)$-optimal arms with probability at least $1-\delta$, any algorithm must take $\Omega(\frac{1}{\epsilon^2}(\frac{nk}{m} + k\log{\frac{k}\delta}+\frac{n}{m}\log\frac{1}{\delta}))$ samples in expectation.
\end{corollary}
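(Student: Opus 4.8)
The plan is to obtain the bound as the \emph{sum} of two separately inherited lower bounds, using the elementary fact that for nonnegative quantities $\max(A,B)\ge\tfrac12(A+B)$, so that exhibiting a single instance forcing $\Omega(\max(A,B))$ samples already yields $\Omega(A+B)$. Here I would take $A=\tfrac{k}{\epsilon^2}\bigl(\tfrac{n}{m}+\log\tfrac{k}{\delta}\bigr)$, which covers the first two target terms $\tfrac{nk}{m\epsilon^2}$ and $\tfrac{k}{\epsilon^2}\log\tfrac{k}{\delta}$, and $B=\tfrac{n}{m\epsilon^2}\log\tfrac{1}{\delta}$, which covers the last term. The two ingredients are Theorem~\ref{LB-Q-FK} and Theorem~3.3 of \citep{Q-IU2017}.

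First I would argue that Q-FU inherits the full Q-FK lower bound. Any algorithm solving Q-FU never consults the threshold $\lambda_{[m]}$, so it is in particular a valid Q-FK algorithm; hence the class of Q-FU algorithms is contained in the class of Q-FK algorithms, and the Q-FK lower bound transfers. Concretely, Theorem~\ref{LB-Q-FK} supplies an $n$-arm instance $I_1$ (with parameter $m$, $k\le m\le n/2$) on which every algorithm needs $\Omega(A)$ samples; running a Q-FU algorithm on $I_1$ with the threshold hidden still forces $\Omega(A)$. I would note only that the admissible range $\epsilon\in(0,1/\sqrt{32})$ of the corollary is contained in $(0,1/4)$, so Theorem~\ref{LB-Q-FK} applies, and the $\delta$-range matches exactly.

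Next I would reduce the general-$k$ problem to the $k=1$ problem to import the unknown-threshold term $B$. If an algorithm outputs $k$ distinct $(\epsilon,m)$-optimal arms with probability at least $1-\delta$, then returning any one of those $k$ outputs solves the $k=1$ Q-FU problem with the same error budget $\delta$. Theorem~3.3 of \citep{Q-IU2017} provides an $n$-arm instance $I_2$ (with parameter $m$) on which any $k=1$ Q-FU algorithm needs $\Omega(B)=\Omega\bigl(\tfrac{n}{m\epsilon^2}\log\tfrac1\delta\bigr)$ samples; by the reduction this lower bound also applies to any general-$k$ Q-FU algorithm on $I_2$. Here I must check that $I_2$ is a legitimate instance for the corollary's parameter ranges and that it contains at least $k$ distinct $(\epsilon,m)$-optimal arms, which holds because the top $m\ge k$ arms are always $(\epsilon,m)$-optimal.

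Finally I would combine the two: take whichever of $I_1,I_2$ forces more samples. On that instance every algorithm needs at least $\max(\Omega(A),\Omega(B))=\Omega(\max(A,B))=\Omega(A+B)$ samples, which is exactly $\Omega\bigl(\tfrac{1}{\epsilon^2}(\tfrac{nk}{m}+k\log\tfrac{k}{\delta}+\tfrac{n}{m}\log\tfrac1\delta)\bigr)$. I expect the only real care to lie in the two reductions preserving the $\delta$ error budget and in confirming that the instance furnished by Theorem~3.3 of \citep{Q-IU2017} is compatible (same $n$ and $m$, valid parameter ranges) with Theorem~\ref{LB-Q-FK}; the arithmetic combining the bounds is routine.
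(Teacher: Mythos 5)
Your proposal is correct and is essentially the paper's own argument: the paper states that Corollary~\ref{LB-Q-FU} ``directly follows'' from Theorem~3.3 of \citep{Q-IU2017} (the $k=1$ unknown-threshold term $\frac{n}{m\epsilon^2}\log\frac{1}{\delta}$) together with Theorem~\ref{LB-Q-FK} (the general-$k$ known-threshold term), which is exactly the two-ingredient decomposition you use. Your write-up merely makes explicit the details the paper leaves implicit --- that a Q-FU algorithm ignores the threshold and hence inherits the Q-FK bound, that a general-$k$ algorithm solves the $k=1$ problem, and that $\max(A,B)\ge\tfrac12(A+B)$ lets a single hard instance (chosen by comparing the two parameter-dependent quantities) certify the summed bound.
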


The lower bound for the Q-FU problem directly follows from Corollary 3.4 of \citep{Q-IU2017} and Theorem~\ref{LB-Q-IK}. Corollary 3.4 of \citet{Q-IU2017} gives an $\Omega(\frac{1}{\rho\epsilon^2}\log\frac{1}{\delta})$ lower bound for $k=1$. Corollary~\ref{LB-Q-IU} applies for all $k$.
\begin{corollary}[Lower bound for Q-IU]\label{LB-Q-IU}
	Given $k$, $\rho\in(0,\frac{1}{2}]$, $\epsilon\in(0,1/\sqrt{32})$, and $\delta\in(0,e^{-8}/40)$, there is an infinite set such that to find $k$ distinct $[\epsilon, \rho]$-optimal arms with probability at least $1-\delta$, any algorithm must take $\Omega(\frac{1}{\epsilon^2}(\frac{k}{\rho}+k\log{\frac{k}\delta}+\frac{1}{\rho}\log\frac{1}{\delta}))$ samples in expectation.
\end{corollary}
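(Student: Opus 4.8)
\noindent\textit{Proof Proposal.}
The plan is to produce the three summands of the target as two separate lower bounds for Q-IU and then merge them. Write $L_1:=\frac{k}{\epsilon^2}\bigl(\frac{1}{\rho}+\log\frac{k}{\delta}\bigr)=\frac{1}{\epsilon^2}\bigl(\frac{k}{\rho}+k\log\frac{k}{\delta}\bigr)$ and $L_2:=\frac{1}{\rho\epsilon^2}\log\frac{1}{\delta}$, so the claim is $\Omega(L_1+L_2)$. Since the larger of two numbers is at least their average, any single instance on which every algorithm must take both $\Omega(L_1)$ and $\Omega(L_2)$ samples in expectation automatically forces $\max\{\Omega(L_1),\Omega(L_2)\}\geq\frac{1}{2}\Omega(L_1+L_2)$ samples. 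So it suffices to exhibit one infinite set witnessing both bounds.

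First I would transfer the Q-IK bound to Q-IU. A Q-IU algorithm succeeds without using $\lambda_\rho$, so it is in particular a legitimate algorithm for the same identification task on any instance, merely ignoring the threshold it is not given. Theorem~\ref{LB-Q-IK} yields an infinite set on which \emph{every} algorithm, whether or not it knows $\lambda_\rho$, needs $\Omega(L_1)$ samples to return $k$ distinct $[\epsilon,\rho]$-optimal arms. Running the Q-IU algorithm on that set therefore costs $\Omega(L_1)$, supplying the $\frac{k}{\rho}$ and $k\log\frac{k}{\delta}$ terms.

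Next I would lift Corollary~3.4 of \citep{Q-IU2017} from $k=1$ to general $k$. That result gives an infinite instance on which any single-arm Q-IU algorithm needs $\Omega(L_2)$ samples. Finding $k\geq 1$ distinct $[\epsilon,\rho]$-optimal arms is at least as hard as finding one: from any algorithm returning $k$ such arms I obtain a single-arm algorithm by outputting its first committed arm, using no more samples. Hence $\Omega(L_2)$ persists for all $k$, supplying the $\frac{1}{\rho}\log\frac{1}{\delta}$ term.

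The main obstacle is that the $\Omega(L_1)$ and $\Omega(L_2)$ arguments a priori live on different instances, while the statement demands a \emph{single} infinite set. The two-cluster instance underlying Theorem~\ref{LB-Q-IK} need not make the quantile location ambiguous enough to force $\Omega(L_2)$, and the Chaudhuri instance need not contain enough near-optimal arms to force $k$ independent certifications. I would resolve this by a single prior $\mathcal{F}$ that realizes both hardness features at once: a fraction $\rho$ of mass placed just above the quantile and the rest just below, $\epsilon$-separated so each returned arm must be individually distinguished from its near-optimal competitors (the source of $L_1$), while simultaneously the position of $\lambda_\rho$ is left ambiguous between two nearby priors in the Chaudhuri style (the source of $L_2$). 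Verifying that this combined $\mathcal{F}$ is admissible and that both arguments apply to it unchanged is the crux; the max-versus-half-sum step then delivers $\Omega(L_1+L_2)$ and completes the corollary.
$\square$
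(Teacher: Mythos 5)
Your proposal is correct and is essentially the paper's own argument: the paper proves this corollary in a single line, by combining Theorem~\ref{LB-Q-IK} (whose bound binds \emph{any} algorithm, in particular one that ignores the $\lambda_\rho$ it was never given) with Corollary~3.4 of \citep{Q-IU2017} lifted from $k=1$ to general $k$ via the observation that returning $k$ distinct $[\epsilon,\rho]$-optimal arms is at least as hard as returning one, then summing the two bounds via $\max\{L_1,L_2\}\geq\frac{1}{2}(L_1+L_2)$. The ``crux'' you flag --- constructing one prior $\mathcal{F}$ witnessing both bounds simultaneously --- is not carried out in the paper either, and is not needed at its level of rigor: the paper's lower bounds are in fact established in the ``for every algorithm there exists a hard instance'' sense (the proofs of Theorems~\ref{LB-Q-FK} and~\ref{LB-Q-IK} are contradiction arguments of exactly this form), and under that reading one simply hands each algorithm whichever of your two hard instances costs it more, which already yields the stated $\Omega(L_1+L_2)$ without any merged construction.
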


\section{ALGORITHMS FOR THE Q-IK PROBLEM}\label{sec:Q-IK}
In this section, we present two Q-IK algorithms: AL-Q-IK and CB-AL-Q-IK. ``AL" stands for ``algorithm" and ``CB" stands for ``confidence bounds".

\textbf{A worst case order-optimal algorithm.} We first introduce AL-Q-IK. It calls the function ``Median-Elimination" \citep{MedianElimination2002}, which finds an $(\epsilon, 1)$-optimal arm with probability at least $1-\delta$ by using $O(\frac{|A|}{\epsilon^2}\log\frac{1}{\delta})$ samples. AL-Q-IK is similar to Iterative Uniform Rejection (IUR) \citep{Q-IK2013}. At each repetition, IUR draws an arm from $\mathcal{S}$, performs $\Theta(\frac{1}{\epsilon^2}\log\frac{1}{\delta})$ samples on it, and returns it if the empirical mean is large enough. It solves the Q-IK problem with $k=1$, and its sample complexity is $O(\frac{1}{\epsilon^2\rho}\log\frac{1}{\delta\rho})$. This is higher than the lower bound roughly by a $\frac{1}{\rho}\log\frac{1}{\rho}$ factor (compared with the $\Omega(\frac{1}{\epsilon^2}\log\frac{1}{\delta})$ term). The $\frac{1}{\rho}\log\frac{1}{\rho}$ factor is due to the fact that the random arm drawn from $\mathcal{S}$ is $[\epsilon,\rho]$-optimal with probability $\rho$ (in the worst case). Inspired by their work, we add Lines 2 and 3 to ensure that $a_t$ is $[\epsilon_1,\rho]$-optimal with probability at least $\frac{1}{2}$. By doing this, we replace the $\frac{1}{\rho}\log\frac{1}{\rho}$ factor by a constant while adding $O(\frac{1}{\rho\epsilon^2})$ samples for each repetition. Repetitions continue until $k$ arms are found, and the number of repetitions is no more than $4k$ in expectation. The choice of $n_2$ guarantees that for each arm added to $Ans$, it is $[\epsilon,\rho]$-optimal with probability at least $1-\frac{\delta}{k}$. We state its theoretical performance in Theorem~\ref{TP-AL-Q-IK}.

\begin{algorithm}[h]
	\caption{AL-Q-IK$(\mathcal{S}, k, \rho, \epsilon, \delta, \lambda)$}\label{AL-Q-IK}
	\hspace*{\algorithmicindent} \textbf{Input:} $\mathcal{S}, k, \rho, \epsilon, \delta$, and $\lambda\leq\mathcal{F}^{-1}(1-\rho)$;\\
	\hspace*{\algorithmicindent} \textbf{Initialize:} Choose $\epsilon_1,\epsilon_2>0$ with $\epsilon_1+2\epsilon_2=\epsilon$; $t\gets 0$; $Ans\gets\emptyset$; $n_1\gets \lceil \frac{1}{\rho}\log{3}\rceil$; $n_2\gets \lceil \frac{1}{2\epsilon_2^2}\log\frac{k}{\delta}\rceil$;\\\indent\Comment{$\epsilon_1,\epsilon_2=\Omega(\epsilon)$, $Ans$ stores the chosen arms;}
	\begin{algorithmic}[1]
		\Repeat \ $t\gets t + 1$;
		\State Draw $n_1$ arms from $\mathcal{S}$, and form set $A_t$;
		\State arm $a_t\gets$Median-Elimination$(A_t, \epsilon_1, \frac{1}{4})$;
		\State Sample $a_t$ for $n_2$ times;
		\State $\hat{\mu}_t\gets$ the empirical mean;
		\If{$\hat{\mu}_t\geq \lambda_\rho-\epsilon_1-\epsilon_2$}
		\State $Ans\gets Ans\cup \{a_t\}$;
		\EndIf
		\Until{$|Ans|\geq k$} 
		\State \Return $Ans$;
	\end{algorithmic}
\end{algorithm}
\begin{theorem}[Theoretical performance of AL-Q-IK]\label{TP-AL-Q-IK}
	With probability at least $1-\delta$, AL-Q-IK returns $k$ distinct arms having expected rewards no less than $\lambda-\epsilon$. The expected sample complexity is $O(\frac{k}{\epsilon^2}(\frac{1}{\rho}+\log\frac{k}{\delta}))$.
\end{theorem}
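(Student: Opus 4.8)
The plan is to prove the two claims separately: correctness (all $k$ returned arms have mean at least $\lambda-\epsilon$, except with probability $\delta$) and the bound on expected sample complexity. Everything rests on one per-repetition lemma: in each repetition the selected arm $a_t$ is $[\epsilon_1,\rho]$-optimal (i.e.\ $\mu_{a_t}\geq\lambda_\rho-\epsilon_1$) with probability at least $\tfrac12$. I would establish this by combining two facts through conditioning. First, since $\mathbbm{P}_{X\sim\mathcal{F}}\{X\geq\lambda_\rho\}\geq\rho$ by property~(\ref{ICDF2}), the probability that none of the $n_1$ freshly drawn arms in $A_t$ reaches $\lambda_\rho$ is at most $(1-\rho)^{n_1}\leq e^{-\rho n_1}\leq e^{-\log 3}=\tfrac13$, so $\max_{a\in A_t}\mu_a\geq\lambda_\rho$ with probability at least $\tfrac23$. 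Second, Median-Elimination on any fixed set returns an $(\epsilon_1,1)$-optimal arm of that set with probability at least $\tfrac34$, and this holds conditionally on every realization of $A_t$. Writing $E_1=\{\max_{A_t}\mu\geq\lambda_\rho\}$ and $E_2=\{\mu_{a_t}\geq\max_{A_t}\mu-\epsilon_1\}$, I would use $\mathbbm{P}(E_2\mid A_t)\geq\tfrac34$ to obtain $\mathbbm{P}(E_1\cap E_2)=\mathbbm{E}[\mathbbm{1}_{E_1}\mathbbm{P}(E_2\mid A_t)]\geq\tfrac34\,\mathbbm{P}(E_1)\geq\tfrac12$, and on $E_1\cap E_2$ the arm $a_t$ is $[\epsilon_1,\rho]$-optimal.

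For correctness I would analyze the acceptance test via Hoeffding. With $\epsilon=\epsilon_1+2\epsilon_2$, any arm whose true mean is below $\lambda-\epsilon$ can pass the test $\hat\mu_t\geq\lambda-\epsilon_1-\epsilon_2$ only if its empirical mean exceeds its true mean by more than $\epsilon_2$, which over $n_2$ samples happens with probability at most $\exp(-2n_2\epsilon_2^2)\leq\delta/k$ by the choice of $n_2$. Hence each arm placed into $Ans$ violates $\mu\geq\lambda-\epsilon$ with probability at most $\delta/k$, and since $Ans$ terminates with exactly $k$ arms, a union bound over these $k$ acceptances bounds the overall failure probability by $\delta$. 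The delicate point here — and the step I expect to be the main obstacle — is making this union bound rigorous despite the adaptive, random stopping: the number of repetitions is a stopping time, so rather than naively unioning over a random index set I would argue over the i.i.d.\ repetitions (each independently yielding an ``accept-good'', ``accept-bad'', or ``reject'' outcome) and control the conditional probability that an accepted arm is bad, using that an accepted arm is genuinely $[\epsilon_1,\rho]$-optimal with constant probability.

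For the sample complexity I would first bound the deterministic per-repetition cost: Median-Elimination on the $n_1=O(1/\rho)$ arms costs $O(\tfrac{n_1}{\epsilon_1^2})=O(\tfrac1{\rho\epsilon^2})$ samples (the confidence $\tfrac14$ is a constant and $\epsilon_1=\Omega(\epsilon)$), while the verification step costs $n_2=O(\tfrac1{\epsilon^2}\log\tfrac k\delta)$, so each repetition uses $O(\tfrac1{\epsilon^2}(\tfrac1\rho+\log\tfrac k\delta))$ samples. Next I would bound the expected number of repetitions: by the per-repetition lemma together with the fact that an $[\epsilon_1,\rho]$-optimal arm (whose mean is $\geq\lambda_\rho-\epsilon_1\geq\lambda-\epsilon_1$, recalling $\lambda\leq\lambda_\rho$) passes the test with probability at least $1-\delta/k\geq\tfrac12$, each repetition adds an arm with probability bounded below by a constant; thus the number of repetitions needed to collect $k$ arms is stochastically dominated by a negative-binomial variable of mean $O(k)$ (the claimed $\leq 4k$). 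Finally, since the per-repetition count is deterministic, the expected total is the product $O(k)\cdot O(\tfrac1{\epsilon^2}(\tfrac1\rho+\log\tfrac k\delta))=O(\tfrac{k}{\epsilon^2}(\tfrac1\rho+\log\tfrac k\delta))$, matching the statement and the lower bound of Theorem~\ref{LB-Q-IK}.
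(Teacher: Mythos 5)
Your proposal is correct and takes essentially the same route as the paper's own proof: the same per-repetition bound $\mathbbm{P}\{a_t\in E_{\epsilon_1}\}\geq(1-\tfrac{1}{3})\cdot\tfrac{3}{4}=\tfrac{1}{2}$, the same two Hoeffding bounds on the $n_2$-sample verification test, and the same ``at most $4k$ expected repetitions times deterministic per-repetition cost'' argument for the sample complexity. The delicate point you flag---passing from $\mathbbm{P}\{\text{accept}\mid\text{bad}\}\leq\delta/k$ to $\mathbbm{P}\{\text{bad}\mid\text{accepted}\}\leq\delta/k$---is resolved in the paper exactly in the way you outline, via the likelihood-ratio computation $\frac{\mathbbm{P}\{a_t\in E_{\epsilon}\mid\mathcal{E}_t\}}{\mathbbm{P}\{a_t\in F_{\epsilon}\mid\mathcal{E}_t\}}\geq\frac{\frac{1}{2}(1-\delta/k)}{\frac{1}{2}\cdot\delta/k}=\frac{k}{\delta}-1$, which uses both $\mathbbm{P}\{a_t\in E_{\epsilon_1}\}\geq\tfrac{1}{2}$ and $\mathbbm{P}\{a_t\in F_{\epsilon}\}\leq\tfrac{1}{2}$ to get the per-acceptance failure probability down to exactly $\delta/k$ rather than a constant multiple of it.
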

\textit{Proof Sketch.}
\textit{Correctness:} Here, we note that $\lambda_\rho\geq \lambda$. At each repetition, $n_1$ arms are drawn from $\mathcal{S}$ to guarantee that with probability at least $2/3$, the set $A_t$ contains an arm of the top $\rho$ fraction. Then in Line~3, Median-Elimination$(A_t, \epsilon_1, \frac{1}{4})$ is called to get an $a_t$, which is $[\epsilon_1, \rho]$-optimal with probability at least $\frac{2}{3}(1-\frac{1}{4})=\frac{1}{2}$. At Line~5, by Hoeffding's Inequality, we can prove that if $a_t$ is $[\epsilon_1, \rho]$-optimal, $\hat{\mu}_t$ is greater than $\lambda - \epsilon_1-\epsilon_2$ with probability at least $1-\frac{\delta}{k}$, and if $\mu_{a_t}\leq \lambda-\epsilon$, $\hat{\mu}_t$ is less than $\lambda - \epsilon_1-\epsilon_2$ with probability at least $1-\frac{\delta}{k}$. By some computation, we show that given $a_t$ is added to $Ans$, $\mu_{a_t}\geq \lambda-\epsilon$ with probability at least $1 - \frac{\delta}{k}$. Thus, with probability at least $1-\delta$, all arms in $Ans$ having expected rewards $\geq \lambda-\epsilon$. \textit{Sample Complexity:} For each $t$, $a_t$ is $[\epsilon_1,\rho]$-optimal with probability at least $\frac{1}{2}$, and if $a_t$ is $[\epsilon_1, \rho]$-optimal, then with probability at least $1-\frac{\delta}{k}$, it will be added to $Ans$. Thus, in the $t$-th repetition, with probability at least $(1-\frac{\delta}{k})\cdot\frac{1}{2} \geq \frac{1}{4}$, one arm is added to $Ans$. Thus, the algorithm returns after average $4k$ repetitions. In each repetition, Line~3 takes $O(\frac{n_1}{\epsilon^2}\log{4})=O(\frac{1}{\rho\epsilon^2})$ samples, and Line~4 takes $n_2=O({\epsilon^{-2}}\log(k/\delta))$ samples, proving the sample complexity.
$\square$

\textbf{Remark:} The expected sample complexity of Algorithm~\ref{AL-Q-IK} matches the lower bound proved in Theorem~\ref{LB-Q-IK}. Even for $k=1$, this result is better than the previous works $O(\frac{1}{\rho\epsilon^2}\log\frac{1}{\delta})$ \citep{Q-IK2013}.

\textbf{Alternative Version Using Confidence Bounds}
AL-Q-IK is order-optimal for the worst instances, and provides theoretical insights on the Q-IK problem, but in practice, it does not exploit the large gaps between the arms' expected rewards. In this part, we use confidence bounds to establish an algorithm that is not order-optimal for the worst instance but has better practical performance for most instances. Many previous works \citep{Halving2010,FULowerBound2012,LIL2014,Q-IU2017,QuantileExploration2018} have shown that this type of confidence-bound-based (CBB) algorithms can dramatically reduce the actual number of samples taken in practice. Given an arbitrary arm $a$ with expected reward $\mu_a$, we let $\hat{X}^N(a)$ be its empirical mean after $N$ samples. A function $u(\cdot)$ ($l(\cdot)$) is said to be an upper (lower) $\delta$-confidence bound if it satisfies
\begin{gather}
\mathbb{P}\{u(\hat{X}^N(a),N,\delta)\geq \mu_a\} \geq 1-\delta,\label{UpperConfidenceBound}\\
\mathbb{P}\{l(\hat{X}^N(a),N,\delta)\leq \mu_a\} \geq 1-\delta.\label{LowerConfidenceBound}
\end{gather}
There are many choices of confidence bounds, e.g., the confidence bounds using Hoeffding's Inequality can be
\begin{gather}
u(\hat{X}^N(a),N,\delta)=\hat{X}^N(a)+\sqrt{\log\delta^{-1}/(2N)},\label{HodffdingUpper}\\
l(\hat{X}^N(a),N,\delta)=\hat{X}^N(a)-\sqrt{\log\delta^{-1}/(2N)}.\label{HoefdingLower}
\end{gather}
In this paper, we propose a general algorithm that works for all confidence bounds satisfying (\ref{UpperConfidenceBound}) and (\ref{LowerConfidenceBound}). We first introduce PACMaxing (Algorithm~\ref{ALPACMaxing}), an algorithm to find one $(\epsilon,m)$-optimal arm. The idea follows KL-LUCB \citep{KLLUCB2013}, except that it is designed for all confidence bounds and has a budget to bound the number of samples taken. Adding $budget$ prevents the number of samples from blowing up to infinity, and helps establish Algorithm~\ref{CB-AL-Q-IK}.

In PACMaxing, we let $U^t(a):=u(\hat{\mu}^t(a),N^t(a),\delta^{N^t(a)})$ and $L^t(a):=l(\hat{\mu}^t(a),N^t(a),\delta^{N^t(a)})$. For every arm~$a$, PACMaxing guarantees that during the execution of the algorithm, with probability at least $1-\frac{\delta}{n}$, its expected reward is always between the lower and upper confidence bounds, and thus, is correct with probability at least $1-\delta$ (see Lemma~\ref{Correctness-PACMaxing}). Lemma~\ref{Correctness-PACMaxing}'s proof is similar to that of KL-LUCB \citep{KLLUCB2013}, and is provided in supplementary materials.

\begin{algorithm}[h]
	\caption{PACMaxing$(A, \epsilon, \delta, budegt)$}\label{ALPACMaxing}
	\hspace*{\algorithmicindent} \textbf{Input:} $A$ an $n$-sized set of arms; $\delta,\epsilon \in (0,1)$;
	\begin{algorithmic}[1]
		\State $\forall s, \delta^s:= \frac{\delta}{k_1ns^\gamma}$, where $\gamma>1$ and $k_1\geq2(1+\frac{1}{\gamma-1})$; 
		\State $t\gets 0$ (number of sample taken); 
		\State $B(t)\gets\infty$ (stopping index); 
		\State Sample every arm of $A$ once; $t\gets n$;
		\State $N^t(a)\gets 1$, $\forall a\in A$;(number of times $a$ is sampled)
		\State Let $\hat{\mu}^t(a)$ be the empirical mean of $a$;
		\State $a^t\gets \arg\max_{a}\hat{\mu}^t(a)$;
		\State $b^t\gets \arg\max_{a\neq a^t}{U^t(a)}$;
		\While{$B(t)>\epsilon\land t\leq budget$}
		\State Sample $a^t$ and $b^t$ once; $t\gets t+2$;
		\State Update $\hat{\mu}^t(a),\hat{\mu}^t(b),N^t(a),N^t(b)$; 
		\State Update $a^t$ and $b^t$ as Lines 7 and 8;
		\State $B(t)\gets U^t(b^t)-L^t(a^t)$;
		\EndWhile
		\If{$B(t)\leq \epsilon$}\quad\Return{$a^t$}
		\Else\quad\Return{a random arm}
		\EndIf
	\end{algorithmic}
\end{algorithm}
\begin{lemma}[Correctness of PACMaxing]\label{Correctness-PACMaxing}
	Given sufficiently large $budget$, PACMaxing returns an $(\epsilon, 1)$-optimal arm with probability at least $1-\delta$.
\end{lemma}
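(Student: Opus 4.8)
The plan is to follow the LUCB/KL-LUCB template: carve out a high-probability ``good event'' on which every confidence interval is simultaneously valid, and then argue correctness and termination \emph{deterministically} on that event.

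First I would define the good event
\[
\mathcal{E} := \{\, L^t(a)\leq \mu_a\leq U^t(a)\ \text{for every } a\in A \text{ and every } t \text{ during the run} \,\}.
\]
To control $\mathbbm{P}\{\neg\mathcal{E}\}$, fix an arm $a$: when it has been pulled $s$ times, (\ref{UpperConfidenceBound}) and (\ref{LowerConfidenceBound}) with parameter $\delta^s$ say that $\mu_a$ escapes $[L^t(a),U^t(a)]$ with probability at most $2\delta^s$. A union bound over $s\geq 1$ gives a per-arm failure probability at most $\sum_{s\geq 1}2\delta^s=\frac{2\delta}{k_1 n}\sum_{s\geq 1}s^{-\gamma}$. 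Because $\gamma>1$, the tail obeys $\sum_{s\geq 1}s^{-\gamma}\leq 1+\int_1^\infty x^{-\gamma}\,dx=1+\frac{1}{\gamma-1}$, and the hypothesis $k_1\geq 2(1+\frac{1}{\gamma-1})$ drives this per-arm quantity down to $\frac{\delta}{n}$; a final union bound over the $n$ arms yields $\mathbbm{P}\{\neg\mathcal{E}\}\leq\delta$. This is precisely the anytime construction that keeps the bounds valid at the data-dependent stopping time, so $\mathcal{E}$ is the only randomness I need.

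Next I would condition on $\mathcal{E}$ and show that whenever the loop exits through the test $B(t)\leq\epsilon$, the returned $a^t$ is $(\epsilon,1)$-optimal. Let $a^\ast$ be any arm of maximal mean. If $a^\ast=a^t$ there is nothing to prove; otherwise $a^\ast\neq a^t$ makes it eligible in the $\arg\max$ defining $b^t$, so $U^t(a^\ast)\leq U^t(b^t)$. Chaining the good-event inequalities, $\mu_{a^\ast}\leq U^t(a^\ast)\leq U^t(b^t)$ and $L^t(a^t)\leq\mu_{a^t}$, hence $\mu_{a^\ast}-\mu_{a^t}\leq U^t(b^t)-L^t(a^t)=B(t)\leq\epsilon$.

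The hard part will be ruling out the competing exit ``$t>budget$'' (Line 15), which returns an arbitrary arm. Here I would run the usual LUCB progress argument on $\mathcal{E}$. Writing $B(t)=[U^t(b^t)-\hat\mu^t(b^t)]+[\hat\mu^t(b^t)-\hat\mu^t(a^t)]+[\hat\mu^t(a^t)-L^t(a^t)]$ and using that $a^t$ is the empirical maximizer (so the middle bracket is nonpositive), any non-stopping step ($B(t)>\epsilon$) must have the upper half-width of $b^t$ or the lower half-width of $a^t$ exceed $\epsilon/2$. Since Line 10 pulls both $a^t$ and $b^t$, each non-stopping step charges at least one pull to an arm whose current half-width exceeds $\epsilon/2$. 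Because an arm's half-width falls below $\epsilon/2$ after a finite, instance-dependent number of pulls $N_a^\ast$, the total number of such charges is at most $\sum_a N_a^\ast$, so on $\mathcal{E}$ the loop terminates within $T^\ast:=2\sum_a N_a^\ast+n$ pulls; taking any $budget\geq T^\ast$ forces the stopping test to fire first. Combined with the previous paragraph, on $\mathcal{E}$ the algorithm returns an $(\epsilon,1)$-optimal arm, an event of probability at least $1-\delta$. The delicate point is that (\ref{UpperConfidenceBound})--(\ref{LowerConfidenceBound}) only assert validity, not vanishing width, so to make $N_a^\ast$ well defined I would invoke the mild regularity that the interval width tends to $0$ as the pull count grows, which holds for the Hoeffding bounds (\ref{HodffdingUpper})--(\ref{HoefdingLower}) and for any admissible bound used in practice.
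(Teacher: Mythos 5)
Your proof is correct, and its core is the same as the paper's: you define the good event on which every confidence interval contains the true mean at every time (the paper calls its complement $\mathcal{E}_{out}$), bound its failure probability by a union bound over arms and pull counts using $\delta^s=\frac{\delta}{k_1ns^\gamma}$, $\gamma>1$, and $k_1\geq 2(1+\frac{1}{\gamma-1})$, and then chain $\mu_a\leq U^{t_0}(a)\leq U^{t_0}(b^{t_0})$ and $L^{t_0}(a^{t_0})\leq\mu_{a^{t_0}}$ at the stopping time to conclude $(\epsilon,1)$-optimality. Where you go beyond the paper is the third paragraph: the paper disposes of the budget-exit branch with the single sentence ``Since $budget$ is large enough, when returning, $B(t)\leq\epsilon$,'' whereas you actually prove it via the standard LUCB progress argument (decompose $B(t)$, use that $a^t$ is the empirical maximizer to kill the middle term, charge each non-stopping round to a pull of an arm whose half-width exceeds $\epsilon/2$, and bound the total number of such charges). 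You are also right to flag that this step needs the extra regularity that interval widths shrink to zero with the pull count: conditions (\ref{UpperConfidenceBound})--(\ref{LowerConfidenceBound}) alone do not force this (e.g., the trivial bounds $u\equiv 1$, $l\equiv 0$ satisfy them, and then no finite budget works), so the lemma as stated implicitly assumes it; the paper only makes this concrete later, in Lemma~\ref{SampleComplexityPACMaxing}, by exhibiting an explicit budget for the Hoeffding bounds. In short, your proof is a strictly more complete version of the paper's argument.
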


Lemma~\ref{Correctness-PACMaxing} does not provide any insight about PACMaxing's sample complexity because it depends on the confidence bounds we choose. For Hoeffding bounds defined by (\ref{HodffdingUpper}) and (\ref{HoefdingLower}), we give the sample complexity of PACMaxing in Lemma~\ref{SampleComplexityPACMaxing}. Its proof is relegated to supplementary materials due to space limitation. Here, we define $\Delta_b:=\frac{1}{2}\max\{\epsilon, \max_{a\in A}{\mu_a}-\mu_b\}$ for all arms $b$.
\begin{lemma}[Sample complexity of PACMaxing]\label{SampleComplexityPACMaxing}
	Using confidence bounds (\ref{HodffdingUpper}) and (\ref{HoefdingLower}), and for $budget$ no less than $3n+ \max\{\frac{8n}{\epsilon^2}\log\frac{k_1n}{\delta}, \frac{8(1+e^{-1})\gamma n}{\epsilon^2}\log\frac{4(1+e^{-1})\gamma}{\epsilon^2}\}$, with probability at least $1-\delta$, PACMaxing returns a correct result after $O(\sum_{a\in A}{\frac{1}{\Delta_a^2}\log\frac{n}{\delta\Delta_a}})$ samples.
\end{lemma}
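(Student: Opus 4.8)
The plan is to carry out the standard LUCB/KL-LUCB double-sampling analysis specialized to the Hoeffding bounds, conditioning throughout on the ``good event'' that Lemma~\ref{Correctness-PACMaxing} already supplies. Write $\mu^{*}:=\max_{a\in A}\mu_a$, and for the Hoeffding bounds let $\beta^t(a):=\sqrt{\log(1/\delta^{N^t(a)})/(2N^t(a))}$ denote the confidence half-width, so that $U^t(a)=\hat\mu^t(a)+\beta^t(a)$ and $L^t(a)=\hat\mu^t(a)-\beta^t(a)$. Let $\mathcal{W}$ be the event that $L^t(a)\le\mu_a\le U^t(a)$ for every arm $a$ and every $t$; by Lemma~\ref{Correctness-PACMaxing} (union bound over the $n$ arms) we have $\mathbbm{P}\{\mathcal{W}\}\ge 1-\delta$, and all of the counting below is performed on $\mathcal{W}$.

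The core of the argument is a ``a pull forces a wide interval'' lemma: on $\mathcal{W}$, whenever the while-loop does not terminate at round $t$ and pulls an arm $a\in\{a^t,b^t\}$, its half-width satisfies $\beta^t(a)\ge c\,\Delta_a$ for a universal constant $c$. I would prove this from three facts available at a continuing round, where $B(t)=U^t(b^t)-L^t(a^t)>\epsilon$: (i) since $a^t$ maximizes the empirical mean, $\hat\mu^t(a^t)\ge\hat\mu^t(b^t)$, which telescopes into $U^t(b^t)-L^t(a^t)\le\beta^t(a^t)+\beta^t(b^t)$; (ii) since $b^t$ maximizes the upper bound among the non-leaders, $U^t(b^t)\ge U^t(a)$ for every $a\ne a^t$, hence $U^t(b^t)\ge\mu^{*}$ on $\mathcal{W}$ whenever the true best arm is not the current leader; and (iii) the good-event sandwich $|\hat\mu^t(a)-\mu_a|\le\beta^t(a)$. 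A case split on whether the pulled arm is the true best arm (effective gap $\epsilon$) or a suboptimal one (gap $\mu^{*}-\mu_a$), and on whether the leader $a^t$ coincides with the true best arm, then yields $\beta^t(a)\ge c\Delta_a$ for each pulled arm. This case analysis---ensuring the bound holds for \emph{both} sampled arms and not merely for the larger of the two half-widths, which is all that the crude telescoping inequality alone gives---is the step I expect to be the main obstacle, and is exactly the place where the KL-LUCB reasoning of \citep{KLLUCB2013} must be invoked carefully.

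Granting the width lemma, the remaining steps are mechanical. Because $\beta^t(a)=\sqrt{(\log(k_1 n (N^t(a))^\gamma/\delta))/(2N^t(a))}$ is strictly decreasing in $N^t(a)$, the inequality $\beta^t(a)\ge c\Delta_a$ caps the pull count: $N^t(a)\le N_a$, where $N_a$ is the largest integer $N$ with $\frac{1}{2N}\big(\log\tfrac{k_1 n}{\delta}+\gamma\log N\big)\ge c^2\Delta_a^2$. Solving this transcendental inequality (the $\gamma\log N$ term self-referentially contributes the extra $\log\tfrac1{\Delta_a}$) gives $N_a=O\!\big(\frac{1}{\Delta_a^2}\log\frac{n}{\delta\Delta_a}\big)$. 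Since each round samples exactly the two arms $a^t,b^t$, the total number of samples on $\mathcal{W}$ is at most $n+\sum_{a\in A}N_a=O\!\big(\sum_{a\in A}\frac{1}{\Delta_a^2}\log\frac{n}{\delta\Delta_a}\big)$, which is the claimed bound.

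Finally I would check that the stated $budget$ is large enough that termination occurs through $B(t)\le\epsilon$ rather than through $t>budget$. Using $\Delta_a\ge\epsilon/2$ for every arm, the worst-case instantiation of the previous paragraph bounds the total by $O\!\big(\frac{n}{\epsilon^2}\log\frac{n}{\delta\epsilon}\big)$; splitting $\log\frac{n}{\delta\epsilon}$ into a $\log\frac{k_1 n}{\delta}$ piece and a $\gamma\log\frac1{\epsilon}$ piece recovers precisely the two terms inside the $\max\{\cdot,\cdot\}$ of the budget, with the leading $3n$ absorbing the initialization and rounding. Hence on $\mathcal{W}$ the loop halts with $B(t)\le\epsilon$ and returns $a^t$, which is $(\epsilon,1)$-optimal by Lemma~\ref{Correctness-PACMaxing}; the random-arm branch is never taken. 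Combining the probability $1-\delta$ of $\mathcal{W}$ with the sample count derived on it completes the proof.
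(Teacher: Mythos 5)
Your overall architecture is the same as the paper's: condition on the good event, turn a width-versus-gap comparison into a cap on pull counts, solve the transcendental inequality for $N_a$, and check that the stated $budget$ dominates the worst case so that termination occurs through $B(t)\le\epsilon$. The genuine gap is that your load-bearing ``a pull forces a wide interval'' lemma is false as stated: it is not true that \emph{both} sampled arms satisfy $\beta^t(a)\ge c\,\Delta_a$ at every continuing round, for any universal constant $c$. Concrete counterexample (on which the good event holds with probability $1$): let all arms have deterministic rewards, one arm $a^*$ with constant reward $\frac{1}{2}$ and $n-1$ arms with constant reward $\frac{1}{2}-0.3$, and take $\epsilon=0.1$. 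Then $a^*$ has the largest empirical mean at every round, so it is the leader $a^t$ and is sampled at \emph{every} round, while the challenger role rotates among the $n-1$ suboptimal arms, each of which needs $\Theta(\log\frac{n}{\delta})$ pulls before its UCB drops enough; the loop therefore runs for $\Theta(n\log\frac{n}{\delta})$ rounds. At late continuing rounds the loop is still active only because the current \emph{challenger} is wide, while the leader's half-width $\beta^t(a^*)=O\big(\sqrt{\log(nt/\delta)/t}\,\big)$ is, for large $n$, arbitrarily smaller than $c\,\Delta_{a^*}=c\epsilon/2$. The same example breaks your counting step: the leader is pulled $\Theta(n\log\frac{n}{\delta})$ times, vastly more than its purported cap $N_{a^*}=O(\frac{1}{\epsilon^2}\log\frac{n}{\delta\epsilon})$, so ``each round samples two arms, hence total $\le n+\sum_a N_a$'' does not follow. (The final bound is still true in this instance, but only because the suboptimal arms' terms dominate the sum --- which is precisely what a correct accounting must exploit.)

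What is provable, and what the paper's proof claims, is one-sided: on the good event, once $N^t(a)$ exceeds the cap (so $U^t(a)-L^t(a)\le 2\Delta_a$), either $a$ is not the \emph{challenger} $b^t$, or the algorithm stops --- in the case $\Delta_a=\Delta'_a/2$ its UCB falls below $\mu^*$ and it cannot be the max-UCB non-leader, and in the case $\Delta_a=\epsilon/2$ the stopping rule fires. Nothing is claimed about the leader's width. The leader's samples are then recovered by accounting rather than by capping: each round takes exactly one challenger sample and one leader sample, so the total is $n+2\sum_{a}X_a$ where $X_a$ counts rounds with $b^t=a$, and only $X_a$ is bounded by the gap-dependent cap. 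This costs a factor $2$ relative to what your two-sided claim would give, but that factor is the price of a statement that is actually true; your remark that the telescoping inequality only controls the larger of the two half-widths identifies the right tension, but the resolution is to weaken the lemma to this one-sided (or ``at least one of $a^t,b^t$ is needy'') form and charge both of the round's samples to the needy arm, not to strengthen it to both arms. With that replacement, the rest of your proposal --- solving for $N_a=O(\frac{1}{\Delta_a^2}\log\frac{n}{\delta\Delta_a})$ and verifying the budget via $\Delta_a\ge\epsilon/2$ --- goes through exactly as in the paper.
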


Using PACMaxing, we establish the CBB version of AL-Q-IK, presented in Algorithm~\ref{CB-AL-Q-IK}. In the algorithm, we let $g_0$ and $g_1$ be the corresponding $budget$ lower bounds as in Lemma~\ref{SampleComplexityPACMaxing}. CB-AL-Q-IK is almost the same as AL-Q-IK, except that it replaces Median-Elimination and the sampling of $a_t$ by PACMaxing. 

\begin{algorithm}[h]
	\caption{CB-AL-Q-IK$(\mathcal{S}, k, \rho, \epsilon, \delta, \lambda)$}\label{CB-AL-Q-IK}
	\hspace*{\algorithmicindent} \textbf{Input:} $\mathcal{S}, k, \rho, \epsilon, \delta$, and $\lambda\leq\mathcal{F}^{-1}(1-\rho)$;\\
	\hspace*{\algorithmicindent} \textbf{Initialize:} Choose $\epsilon_1,\epsilon_2>0$ with $\epsilon_1+2\epsilon_2=\epsilon$; $t\gets 0$; $Ans\gets\emptyset$; $n_1\gets \lceil \frac{1}{\rho}\log{3}\rceil$; \\\indent\Comment{$\epsilon_1,\epsilon_2=\Omega(\epsilon)$, $Ans$ stores the chosen arms;}
	\begin{algorithmic}[1]
		\Repeat \ $t\gets t + 1$;
		\State Draw $n_1$ arms from $\mathcal{S}$, and form set $A_t$;
		\State arm $a_t\gets$PACMaxing$(A_t, \epsilon_1, 1/4,g_0)$;
		\State $c$ is an arm with constant rewards $\lambda-\epsilon_1-\epsilon_2$;
		\State $b_t\gets$PACMaxing$(\{a_t,c\},\epsilon_2,{\delta}/{k},g_1)$;
		\If{$b_t=a_t$}
		\State $Ans\gets Ans\cup\{a_t\}$;
		\EndIf
		\Until{$|Ans|\geq k$} 
		\State \Return $Ans$;
	\end{algorithmic}
\end{algorithm}
Theorem~\ref{TP-CB-AL-Q-IK} states the theoretical performance of CB-AL-Q-IK. Its worst case sample complexity is higher than the lower bound and that of AL-Q-IK roughly by a $\log\frac{1}{\rho\epsilon}$ factor. However, its empirical performance could be better (See Section~\ref{sec:NR} for numerical evidences).
\begin{theorem}[Theoretical performance of CB-AL-Q-IK]\label{TP-CB-AL-Q-IK}
	With probability at least $1-\delta$, CB-AL-Q-IK returns $k$ distinct arms having expected rewards no less than $\lambda-\epsilon$. When using confidence bounds (\ref{HodffdingUpper}) and (\ref{HoefdingLower}), it terminates after at most $O(\frac{k}{\epsilon^2}(\frac{1}{\rho}\log\frac{1}{\rho\epsilon}+\log\frac{k}{\delta\epsilon}))$ samples in expectation.
\end{theorem}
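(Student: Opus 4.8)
The plan is to mirror the AL-Q-IK analysis (Theorem~\ref{TP-AL-Q-IK}), since CB-AL-Q-IK differs only in that the two \textsc{PACMaxing} calls (Lines~3 and~5) replace Median-Elimination and the direct Hoeffding test. Correctness will come from Lemma~\ref{Correctness-PACMaxing}, and the sample bound from the hard budget caps $g_0,g_1$ together with Lemma~\ref{SampleComplexityPACMaxing}. Throughout I use the hypothesis $\lambda\le\lambda_\rho$.

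\textit{Correctness.} First I would show each repetition produces a good candidate with constant probability. Since $n_1=\lceil\frac1\rho\log 3\rceil$ and a random draw has mean $\ge\lambda_\rho$ with probability $\ge\rho$ by~(\ref{ICDF2}), we get $(1-\rho)^{n_1}\le\frac13$, so $A_t$ contains an arm of mean $\ge\lambda_\rho\ge\lambda$ with probability $\ge\frac23$; conditioned on this, Lemma~\ref{Correctness-PACMaxing} makes PACMaxing$(A_t,\frac34\epsilon,\frac14,g_0)$ return a $(\frac34\epsilon,1)$-optimal arm of $A_t$ with probability $\ge\frac34$, hence $\mu_{a_t}\ge\max_{a\in A_t}\mu_a-\frac34\epsilon\ge\lambda-\frac34\epsilon$ with probability $\ge\frac12$. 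Next I would show the verification step filters bad arms: the constant arm has $\mu_c=\lambda-\frac78\epsilon$, so if $\mu_{a_t}<\lambda-\epsilon$ then $\mu_c-\mu_{a_t}>\frac\epsilon8$ and $a_t$ is not $(\frac\epsilon8,1)$-optimal in $\{a_t,c\}$; by Lemma~\ref{Correctness-PACMaxing} the call PACMaxing$(\{a_t,c\},\frac\epsilon8,\frac\delta k,g_1)$ then returns $a_t$ (triggering an addition) with probability at most $\frac\delta k$. Thus any arm entering $Ans$ has mean $\ge\lambda-\epsilon$ except with probability $\le\frac\delta k$, and since exactly $k$ arms are added before termination, a union bound over these $k$ additions gives all returned arms mean $\ge\lambda-\epsilon$ with probability $\ge1-\delta$.

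\textit{Sample complexity.} I would first bound the expected number of repetitions $N$. A good candidate has $\mu_{a_t}\ge\lambda-\frac34\epsilon=\mu_c+\frac\epsilon8$, so it is $(\frac\epsilon8,1)$-optimal in $\{a_t,c\}$ and is accepted with probability $\ge1-\frac\delta k$; combined with the $\ge\frac12$ probability of producing a good candidate, an arm is added each repetition with probability $\ge\frac12(1-\frac\delta k)\ge\frac14$, so $N$ is dominated by a sum of $k$ geometric variables and $\mathbbm E[N]=O(k)$. Then I would bound the per-repetition cost by the budgets: the while-loop of \textsc{PACMaxing} halts once $t>budget$, so Line~3 uses at most $g_0$ and Line~5 at most $g_1$ samples deterministically. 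Substituting $n=n_1=\Theta(\frac1\rho)$, accuracy $\Theta(\epsilon)$, confidence $\frac14$ into the budget formula of Lemma~\ref{SampleComplexityPACMaxing} gives $g_0=O(\frac1{\rho\epsilon^2}\log\frac1{\rho\epsilon})$, and $n=2$, accuracy $\Theta(\epsilon)$, confidence $\frac\delta k$ gives $g_1=O(\frac1{\epsilon^2}\log\frac k{\delta\epsilon})$. Since each repetition costs at most $g_0+g_1$ deterministically, $\mathbbm E[\text{total}]\le(g_0+g_1)\,\mathbbm E[N]=O(\frac k{\epsilon^2}(\frac1\rho\log\frac1{\rho\epsilon}+\log\frac k{\delta\epsilon}))$, as claimed.

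\textit{Main obstacle.} Correctness is essentially identical to AL-Q-IK; the delicate part is the sample-complexity accounting for \textsc{PACMaxing}. In particular I expect the $\frac1\rho\log\frac1{\rho\epsilon}$ term—the source of the extra $\log\frac1{\rho\epsilon}$ factor over AL-Q-IK—to need care: it arises from running \textsc{PACMaxing} on the $\Theta(\frac1\rho)$-sized set $A_t$ with the Hoeffding overhead, and one must check that both budget terms $\frac{n}{\epsilon^2}\log\frac{k_1n}{\delta}$ and $\frac{n}{\epsilon^2}\log\frac1{\epsilon^2}$ collapse to $O(\frac1{\rho\epsilon^2}\log\frac1{\rho\epsilon})$ after setting $n=\Theta(\frac1\rho)$. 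Two secondary technical points also need attention: the margin bookkeeping that the constants $\frac34,\frac78,\frac18$ are tuned so a good candidate clears and a bad one strictly fails the $\frac\epsilon8$-comparison against $c$ (the boundary case $\mu_{a_t}=\lambda-\frac34\epsilon$ being harmless since the strict gap is only needed on the bad side), and making the union bound over the random number of repetitions rigorous by conditioning on the $k$ addition events rather than on all repetitions.
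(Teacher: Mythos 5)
Your proposal is correct and follows essentially the same route as the paper's own (very terse) proof: correctness by repeating the steps of Theorem~\ref{TP-AL-Q-IK}'s proof---including the Bayes-ratio step that converts $\mathbbm{P}\{\text{accept}\mid\text{bad}\}\leq\frac{\delta}{k}$ into $\mathbbm{P}\{\text{bad}\mid\text{accept}\}\leq\frac{\delta}{k}$, which is precisely the repair you flag for your union bound over the $k$ additions---and sample complexity by multiplying the deterministic budget caps ($g_0=O(\frac{1}{\rho\epsilon^2}\log\frac{1}{\rho\epsilon})$ for Line~3 and $g_1=O(\frac{1}{\epsilon^2}\log\frac{k}{\delta\epsilon})$ for Line~5, both of which you substitute correctly into Lemma~\ref{SampleComplexityPACMaxing}) by the expected $O(k)$ repetition count. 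The one residual subtlety, the boundary case $\mu_{a_t}=\lambda-\frac{3}{4}\epsilon$ where both arms of $\{a_t,c\}$ are $(\frac{\epsilon}{8},1)$-optimal and Lemma~\ref{Correctness-PACMaxing} alone cannot force acceptance of $a_t$ (which matters for the repetition count, not just correctness), is glossed over by the paper exactly as it is by you, so it is not a discrepancy between your argument and theirs.
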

\textit{Proof.}
The correctness follows from directly using the same steps as in the proof of Theorem~\ref{TP-AL-Q-IK}. In each repetition, by Lemma~\ref{SampleComplexityPACMaxing}, the sample complexity of Line~3 is at most $O(\frac{n_1}{\epsilon^2}\log\frac{n_1}{\rho\epsilon})=O(\frac{1}{\rho\epsilon^2}\log\frac{1}{\rho\epsilon})$, and that of Line~5 is at most $O(\frac{1}{\epsilon^2}\log\frac{k}{\delta\epsilon})$. The ``at most" comes from the choice of $budget$ in Lemma~\ref{SampleComplexityPACMaxing}. The algorithm returns after at most $4k$ repetitions in expectation. The desired sample complexity follows.
$\square$

\section{ALGORITHMS FOR THE Q-IU PROBLEM}\label{sec:Q-IU}
\citet{Q-IU2017} proposed an $O(\frac{1}{\rho\epsilon^2}\log^2\frac{1}{\delta})$ sample complexity algorithm for $k=1$. Performing it for $k$ times with ${\delta}/{k}$ error probability for each can solve the problem for all $k$-values. However, this method will yield unnecessary dependency on $\log^2{k}$. If we can first estimate the value of $\lambda_\rho$, we can use (CB-)AL-Q-IK to solve this problem and replace the quadratic log dependency by $\log{k}$. We first use LambdaEstimation (LE) to get a ``good" estimation of $\lambda_\rho$, and then use AL-Q-IK to solve the Q-IU problem. We note that this idea may perform poorly for small $k$-values as evaluating $\lambda_\rho$ can take more samples than finding several $[\epsilon,\rho]$-optimal arms.

We first present the algorithm LE for estimating $\lambda_\rho$ in Algorithm~\ref{AL-LambdaEstimation}. LE calls Halving \citep{Halving2010}, which finds $k$ distinct $(\epsilon, k)$-optimal arms of an $n$-sized set with probability at least $1-\delta$ by taking $O(\frac{n}{\epsilon^2}\log\frac{k}{\delta})$ samples. Halving$_2$ is modified from Halving that finds PAC worst arms (defined by adding a minus to all rewards of the arms).

\begin{algorithm}[h]
	\caption{LambdaEstimation$(\mathcal{S},\rho,\epsilon,\delta)$ (LE)}\label{AL-LambdaEstimation}
	\hspace*{\algorithmicindent} \textbf{Input:} $\mathcal{S}$ an infinite set of arms; $\rho, \delta,\epsilon \in (0,1/2)$; 
	\begin{algorithmic}[1]
		\State Choose $\epsilon_1,\epsilon_2,\epsilon_3=\Omega(\epsilon)$ with $\epsilon_1+\epsilon_2+2\epsilon_3=\epsilon$;
		\State
		$n_3 \gets \lceil\frac{32}{\rho}\log\frac{5}{\delta}\rceil$;
		$n_4\gets \lceil \frac{1}{2\epsilon_3^2}\log\frac{10}{\delta}\rceil$;
		$m \gets \lfloor 1+\frac{3}{4}\rho n_3 \rfloor$;
		\State Draw $n_3$ arms from $\mathcal{S}$, and form $A_1$;
		\State $A_2\gets$ Halving$(A_1, m, \epsilon_1, \frac{\delta}{5})$;
		\State $\hat{a}\gets$ Halving$_2(A_2, 1, \epsilon_2, \frac{\delta}{5})$;
		\State Sample $\hat{a}$ for $n_4$ times, $\hat{\mu}_0\gets$the empirical mean;
		\State \Return $\hat{\lambda}\gets \hat{\mu}_0-\epsilon_2-\epsilon_3$;
	\end{algorithmic}
\end{algorithm}

In LE, we ensure that with probability at least $1-\frac{2}{5}\delta$, the $m$-th most rewarding arm of $A_1$ is in $M:=\{a\in\mathcal{S}:\lambda_\rho\leq\mu_a\leq\lambda_{\rho/2}\}$. After calling Halving and Halving$_2$, we get $\hat{a}$, whose expected reward is in $[\lambda_\rho-\epsilon_1, \lambda_{\rho/2}+\epsilon_2]$ with probability at least $1-\frac{4\delta}{5}$. Finally, $\hat{a}$ is sampled for $n_4$ times, and its empirical mean is in $[\lambda_\rho-\epsilon_1-\epsilon_3, \lambda_{\rho/2}+\epsilon_2+\epsilon_3]$ with probability at least $1-\delta$. Thus, the returned value $\hat{\lambda}$ is in $[\lambda_\rho-\epsilon, \lambda_{\rho/2}]$ with probability at least $1-\delta$. Detailed proof of Lemma~\ref{TPLE} can be found in supplementary materials.

\begin{lemma}[Theoretical performance of LE]\label{TPLE}
	After at most $O(\frac{1}{\rho\epsilon^2}\log^2\frac{1}{\delta})$ samples, LE returns $\hat{\lambda}$ that is in $[\lambda_\rho-\epsilon, \lambda_{\rho/2}]$ with probability at least $1-\delta$.
\end{lemma}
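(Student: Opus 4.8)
The plan is to isolate a single high-probability ``good event'' on the random draw $A_1$ --- namely that $\lambda^{A_1}_{[m]}$, the $m$-th largest mean among the arms of $A_1$, lies in the interval $[\lambda_\rho,\lambda_{\rho/2}]$ (equivalently, the $m$-th best arm of $A_1$ belongs to $M$) --- and then propagate the PAC guarantees of Halving, Halving$_2$, and the final empirical average through this event by a union bound. Since $\hat\lambda=\hat\mu_0-\epsilon_2-\epsilon_3$ is just a deterministic shift of the empirical mean of one arm $\hat a$, the whole argument reduces to sandwiching $\mu_{\hat a}$ inside $[\lambda_\rho-\epsilon_1,\ \lambda_{\rho/2}+\epsilon_2]$ and then controlling $|\hat\mu_0-\mu_{\hat a}|$.

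First I would establish the good event. The $n_3$ arms of $A_1$ have i.i.d.\ means with CDF $\mathcal{F}$. By (\ref{ICDF2}) a random mean is $\ge\lambda_\rho$ with probability at least $\rho$, and by (\ref{ICDF1}) it is $>\lambda_{\rho/2}$ with probability at most $\rho/2$. Hence the number of arms of $A_1$ with mean $\ge\lambda_\rho$ stochastically dominates a $\mathrm{Bin}(n_3,\rho)$ variable, while the number with mean $>\lambda_{\rho/2}$ is dominated by $\mathrm{Bin}(n_3,\rho/2)$. Because $m\approx\tfrac34\rho n_3$ sits a constant multiplicative factor below $\rho n_3$ and above $\tfrac{\rho}{2}n_3$, a multiplicative Chernoff bound on each count --- using $\rho n_3=32\log\tfrac{5}{\delta}$ from the choice of $n_3$ --- drives each tail below $\delta/5$. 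The event $\lambda^{A_1}_{[m]}\ge\lambda_\rho$ fails only if fewer than $m$ arms clear $\lambda_\rho$, and $\lambda^{A_1}_{[m]}\le\lambda_{\rho/2}$ fails only if at least $m$ arms exceed $\lambda_{\rho/2}$, so the good event holds with probability at least $1-\tfrac{2}{5}\delta$; the constants in $n_3$ are exactly what make both Chernoff exponents large enough.

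Next I would chain the two Halving calls. On the good event, Halving$(A_1,m,\epsilon_1,\delta/5)$ returns, w.p.\ $\ge 1-\delta/5$, a set $A_2$ of $m$ arms each $(\epsilon_1,m)$-optimal within $A_1$, so every $a\in A_2$ satisfies $\mu_a\ge\lambda^{A_1}_{[m]}-\epsilon_1\ge\lambda_\rho-\epsilon_1$; this is the lower sandwich. For the upper sandwich I would invoke the combinatorial fact that at most $m-1$ arms of $A_1$ have mean strictly above $\lambda^{A_1}_{[m]}$, so any $m$-element subset --- in particular $A_2$ --- must contain an arm with mean $\le\lambda^{A_1}_{[m]}\le\lambda_{\rho/2}$; hence $\min_{a\in A_2}\mu_a\le\lambda_{\rho/2}$. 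Then Halving$_2(A_2,1,\epsilon_2,\delta/5)$ returns, w.p.\ $\ge 1-\delta/5$, an arm $\hat a$ with $\mu_{\hat a}\le\min_{a\in A_2}\mu_a+\epsilon_2\le\lambda_{\rho/2}+\epsilon_2$, while $\hat a\in A_2$ preserves the lower bound. Intersecting these events places $\mu_{\hat a}\in[\lambda_\rho-\epsilon_1,\lambda_{\rho/2}+\epsilon_2]$ with probability at least $1-\tfrac{4}{5}\delta$. A Hoeffding bound on the $n_4$ samples of $\hat a$ then gives $|\hat\mu_0-\mu_{\hat a}|\le\epsilon_3$ w.p.\ $\ge 1-\delta/5$ (this is precisely why $n_4=\lceil\tfrac{1}{2\epsilon_3^2}\log\tfrac{10}{\delta}\rceil$), and substituting into $\hat\lambda=\hat\mu_0-\epsilon_2-\epsilon_3$ together with $\epsilon_1+\epsilon_2+2\epsilon_3=\epsilon$ yields $\hat\lambda\in[\lambda_\rho-\epsilon,\lambda_{\rho/2}]$ on the intersection of all four events, which a union bound shows occurs with probability at least $1-\delta$.

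For the sample complexity, drawing $A_1$ costs no samples; the cost is dominated by the two Halving calls, which on sets of size $n_3$ and $m\le n_3$ cost $O(\tfrac{n_3}{\epsilon^2}\log\tfrac{m}{\delta})$ each, and since $n_3=O(\tfrac{1}{\rho}\log\tfrac{1}{\delta})$ this is $O(\tfrac{1}{\rho\epsilon^2}\log^2\tfrac{1}{\delta})$, with the final $n_4=O(\tfrac{1}{\epsilon^2}\log\tfrac{1}{\delta})$ samples on $\hat a$ being a lower-order term. The main obstacle is the upper sandwich: the lower bound $\mu_{\hat a}\ge\lambda_\rho-\epsilon_1$ is immediate, but bounding $\mu_{\hat a}$ from above requires both the quantile-concentration estimate that pins $\lambda^{A_1}_{[m]}$ below $\lambda_{\rho/2}$ and the observation that searching for the \emph{worst} arm of the $m$-element set $A_2$ actually reaches down to that order statistic. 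It is the interplay of these two facts, rather than any single concentration inequality, that makes the estimate two-sided.
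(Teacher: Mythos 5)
Your proposal is correct and follows essentially the same route as the paper's proof: the same good event (the $m$-th best arm of $A_1$ falls in $M$, established by binomial concentration with $\delta/5$ tails from the choice $\rho n_3 \geq 32\log\frac{5}{\delta}$), the same pigeonhole argument that $A_2$ must contain an arm with mean at most $\lambda_{\rho/2}$, the same chaining of Halving and Halving$_2$ guarantees, and the same Hoeffding-plus-union-bound conclusion with $\epsilon_1+\epsilon_2+2\epsilon_3=\epsilon$. The only cosmetic difference is that the paper bounds the $|S_1|$ tail with a KL-divergence binomial inequality while you use a multiplicative Chernoff bound for both counts, which works equally well with the given constants.
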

Now, we use LE to establish the Q-IU algorithm AL-Q-IU. Theorem~\ref{TP-AL-Q-IU} states its theoretical performance.

\begin{algorithm}[h]
	\caption{AL-Q-IU$(\mathcal{S}, k, \rho, \epsilon, \delta)$}\label{AL-Q-IU}
	\hspace*{\algorithmicindent} \textbf{Input:} $\mathcal{S}$ infinite; $k\in\mathbb{Z}^+$; $\rho, \delta,\epsilon \in (0,1/2)$;
	\begin{algorithmic}[1]
		\State $\hat{\lambda}\gets$ LE$(\mathcal{S},\rho,0.2\epsilon,\frac{\delta}{2})$;
		\State \Return AL-Q-IK$(\mathcal{S}, k, \frac{\rho}{2}, 0.8\epsilon, \frac{\delta}{2}, \hat{\lambda})$;
	\end{algorithmic}
\end{algorithm}
\begin{theorem}[Theoretical performance of AL-Q-IU]\label{TP-AL-Q-IU}
	With probability at least $1-\delta$, AL-Q-IU returns $k$ distinct $[\epsilon, \rho]$-optimal arms. With probability at least $1-\frac{\delta}{2}$, it terminates after $O(\frac{1}{\epsilon^2}(\frac{1}{\rho}\log^2\frac{1}{\delta}+ k(\frac{1}{\rho} + \log\frac{k}{\delta})))$ samples in expectation.
\end{theorem}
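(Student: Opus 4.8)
The plan is to chain the two black-box guarantees that precede the theorem---Lemma~\ref{TPLE} for LambdaEstimation and Theorem~\ref{TP-AL-Q-IK} for AL-Q-IK---through a single union bound, with the one nontrivial check being that the estimate $\hat\lambda$ fed into AL-Q-IK meets that routine's precondition. Recall AL-Q-IK is guaranteed to be correct and efficient only when its threshold argument satisfies $\lambda\le\mathcal{F}^{-1}(1-\rho')$, where $\rho'=\rho/2$ is the quantile passed in Line~2; since $\mathcal{F}^{-1}(1-\rho/2)=\lambda_{\rho/2}$, the precondition is exactly $\hat\lambda\le\lambda_{\rho/2}$. Let $E_1$ denote the event that LambdaEstimation succeeds. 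By Lemma~\ref{TPLE} applied with accuracy $\epsilon/2$ and confidence $\delta/2$, $E_1=\{\hat\lambda\in[\lambda_\rho-\epsilon/2,\ \lambda_{\rho/2}]\}$ and $\mathbbm{P}(E_1)\ge 1-\delta/2$. In particular, on $E_1$ the precondition holds.

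For correctness I would argue as follows. Condition on $E_1$ and on the realized value of $\hat\lambda$. Because AL-Q-IK draws fresh arms and fresh samples, its randomness is independent of the samples that produced $\hat\lambda$, so Theorem~\ref{TP-AL-Q-IK}---instantiated with quantile $\rho/2$, accuracy $\epsilon/2$, confidence $\delta/2$, and threshold $\hat\lambda$---applies verbatim: with probability at least $1-\delta/2$ it returns $k$ distinct arms each with expected reward at least $\hat\lambda-\epsilon/2$. On $E_1$ we also have $\hat\lambda\ge\lambda_\rho-\epsilon/2$, so every returned arm satisfies $\mu_a\ge\hat\lambda-\epsilon/2\ge\lambda_\rho-\epsilon$, i.e.\ it is $[\epsilon,\rho]$-optimal. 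Taking complements, the overall failure probability is at most $\mathbbm{P}(\overline{E_1})+\mathbbm{P}(\text{AL-Q-IK fails}\mid E_1)\le \delta/2+\delta/2=\delta$, giving the claimed $1-\delta$ correctness.

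For the sample complexity, note first that Lemma~\ref{TPLE} bounds LambdaEstimation's cost by $O(\frac{1}{\rho\epsilon^2}\log^2\frac1\delta)$ deterministically (its ``at most'' is worst-case, not in expectation). Conditioned on $E_1$, the precondition $\hat\lambda\le\lambda_{\rho/2}$ holds, so the expected-cost half of Theorem~\ref{TP-AL-Q-IK} gives $O(\frac{k}{\epsilon^2}(\frac1\rho+\log\frac{k}{\delta}))$ expected samples for AL-Q-IK after substituting $(\rho/2,\epsilon/2,\delta/2)$ and absorbing the constant-factor shifts. Adding the two terms yields exactly $O(\frac{1}{\epsilon^2}(\frac1\rho\log^2\frac1\delta+k(\frac1\rho+\log\frac{k}{\delta})))$, the stated bound; since this argument only runs on $E_1$, it holds ``with probability at least $1-\delta/2$'' as the theorem phrases it.

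The step I expect to be the real obstacle---and the reason the sample-complexity claim is stated conditionally rather than as a flat expectation---is the behavior on $\overline{E_1}$. If LambdaEstimation overshoots, i.e.\ $\hat\lambda>\lambda_{\rho/2}$, then AL-Q-IK's per-repetition acceptance probability can become tiny (a drawn arm rarely clears the inflated threshold), so the expected number of repetitions, and hence the expected number of samples, is no longer controlled and may blow up. Thus one cannot hope for an unconditional expected-sample bound; the clean statement is that conditioned on the good estimation event $E_1$ (probability $\ge 1-\delta/2$) the expected sample complexity is as claimed. Care is also needed to invoke Theorem~\ref{TP-AL-Q-IK} conditionally on a fixed admissible value of $\hat\lambda$, which is justified precisely by the independence of the two phases' sample streams.
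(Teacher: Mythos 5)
Your proposal is correct and follows essentially the same route as the paper's proof: invoke Lemma~\ref{TPLE} to get $\hat\lambda\in[\lambda_\rho-\epsilon/2,\lambda_{\rho/2}]$ with probability $1-\delta/2$, condition on this event so that the precondition $\hat\lambda\le\lambda_{\rho/2}$ of Theorem~\ref{TP-AL-Q-IK} holds, take a union bound for correctness, and sum the two sample-complexity bounds. Your added remarks---the explicit independence argument justifying the conditional application of Theorem~\ref{TP-AL-Q-IK}, and the explanation of why the sample bound must be stated conditionally (a bad $\hat\lambda>\lambda_{\rho/2}$ can make AL-Q-IK's acceptance probability, and hence its expected run time, uncontrolled)---are correct points that the paper leaves implicit.
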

\textit{Proof.}
With probability at least $1-\frac{\delta}{2}$, $\hat{\lambda}$ is in $[\lambda_\rho-\frac{\epsilon}{2}, \lambda_{\rho/2}]$. When $\hat{\lambda}$ is in $[\lambda_\rho-\frac{\epsilon}{2}, \lambda_{\rho/2}]$, by Theorem~\ref{TP-AL-Q-IK}, Line~2 takes $O(\frac{k}{\epsilon^2}(\frac{1}{\rho}+\log\frac{1}{\delta}))$ samples in expectation, and, with probability at least $1-\frac{\delta}{2}$, all returned arms are $[\epsilon, \rho]$-optimal. The correctness of AL-Q-IU follows. The desired sample complexity follows by summing up $O(\frac{k}{\epsilon^2}(\frac{1}{\rho}+\log\frac{1}{\delta}))$ and $O(\frac{1}{\epsilon^2}\log^2\frac{1}{\delta})$ (Lemma~\ref{TPLE}).
$\square$

\textbf{Remark:} By Corollary~\ref{LB-Q-IU}, AL-Q-IU is sample complexity optimal up to a $\log\frac{1}{\delta}$ factor. When $\log\frac{1}{\delta} = O(k)$, i.e., $\delta\geq e^{-ck}$ for some constant $c>0$, AL-Q-IU is sample complexity optimal up to a constant factor.

\section{FINITE-ARMED ALGORITHMS}\label{sec:Finite}
In this section, we let $\mathcal{S}$ be a finite-sized set of arms. By drawing arms from it with replacement, these arms can be viewed as drawn from an infinite-sized set. We use $\mathcal{T(S)}$ to denote the corresponding infinite-sized set, and call it the \textsl{infinite extension} of $\mathcal{S}$.

\textbf{Q-FK.} When $k=1$, obviously, calling AL-Q-IK$(\mathcal{T(S)},1,\frac{m}{n},\epsilon,\delta,\lambda_\rho)$ can solve the Q-FK problem. When $k>1$, we can solve the Q-FK problem by repeatedly calling AL-Q-IK$(\mathcal{T(S)},1,\rho_t,\epsilon,\delta/k,\lambda_\rho)$ and updating $\mathcal{S}$ by deleting the chosen arm, where $\rho_t=\frac{m+1-t}{n+1-t}$. We present the algorithm AL-Q-FK (Algorithm for Q-FK) in Algorithm~\ref{AL-Q-FK}, and state the theoretical performance in Theorem~\ref{TP-AL-Q-FK}. The proof is relegated to supplementary materials.

\begin{algorithm}[h]
	\caption{AL-Q-FK$(\mathcal{S}, m, k, \epsilon, \delta, \lambda)$}\label{AL-Q-FK}
	\hspace*{\algorithmicindent} \textbf{Require:} $\mathcal{S}$ $n$-sized, $k\leq m\leq n/2$, $\lambda\leq\lambda_{[m]}$;\\
	\hspace*{\algorithmicindent} \textbf{Initialize:} $Ans\gets \emptyset$; \Comment{stores the chosen arms;}
	\begin{algorithmic}[1]
		\Repeat 
		\State $\mathcal{S}'\gets \mathcal{T}(\mathcal{S}-Ans)$;
		$\rho\gets \frac{m-|Ans|}{n-|Ans|}$;
		\State $a_t\gets$AL-Q-IK$(\mathcal{S}',1, \rho, \epsilon, {\delta}/{k}, \lambda)$;
		\State $Ans\gets Ans\cup\{a_t\}$;
		\Until{$|Ans| \geq k$}
		\State \Return{$Ans$};
	\end{algorithmic}
\end{algorithm}
\begin{theorem}[Theoretical performance of AL-Q-FK]\label{TP-AL-Q-FK}
	With probability at least $1-\delta$, AL-Q-FK returns $k$ distinct arms having mean rewards at least $\lambda-\epsilon$. Its takes $O(\frac{1}{\epsilon^2}(n\log\frac{m+1}{m+1-k}+k\log\frac{k}{\delta}))$ samples in expectation.
\end{theorem}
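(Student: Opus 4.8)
The plan is to read AL-Q-FK as $k$ sequential reductions to Q-IK, one per arm added to $Ans$. At the start of the $t$-th repetition the algorithm holds $t-1$ distinct committed arms, runs on the infinite extension $\mathcal{S}'=\mathcal{T}(\mathcal{S}-Ans)$ of the $n-(t-1)$ survivors, and sets $\rho_t=\frac{m-(t-1)}{n-(t-1)}$. I would invoke Theorem~\ref{TP-AL-Q-IK} as a black box for each call AL-Q-IK$(\mathcal{S}',1,\rho_t,\epsilon,\delta/k,\lambda)$ and handle correctness and sample complexity separately.

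For correctness the crucial step is to check that the precondition of AL-Q-IK, namely $\lambda\le\mathcal{F}_t^{-1}(1-\rho_t)$ where $\mathcal{F}_t$ is the CDF of $\mathcal{S}'$, holds at every repetition. I would prove this by counting: since $\lambda\le\lambda_{[m]}$, the original set $\mathcal{S}$ has at least $m$ arms of mean $\ge\lambda$, so deleting any $t-1$ arms leaves at least $m-(t-1)$ such arms among the $n-(t-1)$ survivors, i.e. a fraction at least $\rho_t$ of $\mathcal{S}'$ has mean $\ge\lambda$. Feeding this into the definition~(\ref{ICDF}) of the inverse CDF gives $\mathcal{F}_t^{-1}(1-\rho_t)\ge\lambda$. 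This inequality is deterministic: it depends only on the fact that each earlier call removed an actual arm, not on whether that call succeeded, and the same bookkeeping together with $k\le m\le n/2$ certifies $\rho_t\in(0,\tfrac12]$, so AL-Q-IK is always used inside its valid regime. With the precondition in force, Theorem~\ref{TP-AL-Q-IK} gives that each call returns an arm of mean $\ge\lambda-\epsilon$ with probability at least $1-\delta/k$; a union bound over the $k$ calls yields the claimed $1-\delta$ guarantee, while distinctness is automatic because every returned arm is deleted from $\mathcal{S}$ before the next call.

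For the sample complexity, the loop runs exactly $k$ times (each iteration commits one arm), so by linearity of expectation the total expected cost is $\sum_{t=1}^k$ of the per-call bound $O\!\left(\frac{1}{\epsilon^2}\!\left(\frac{1}{\rho_t}+\log\frac{k}{\delta}\right)\right)$ obtained from Theorem~\ref{TP-AL-Q-IK} with its error parameter set to $\delta/k$. The logarithmic terms sum to $O(\frac{k}{\epsilon^2}\log\frac{k}{\delta})$. For the remaining terms I would compute $\sum_{t=1}^k\frac{1}{\rho_t}=\sum_{t=1}^k\frac{n+1-t}{m+1-t}\le n\sum_{i=m+1-k}^m\frac{1}{i}$ using $n+1-t\le n$, and then bound the harmonic tail by $\sum_{i=m+1-k}^m\frac1i\le\frac{1}{m+1-k}+\log\frac{m}{m+1-k}=O\!\left(\log\frac{m+1}{m+1-k}\right)$, producing the first term $O(\frac{n}{\epsilon^2}\log\frac{m+1}{m+1-k})$ and hence the stated bound.

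The step I expect to be most delicate is extracting the exact logarithmic form $\log\frac{m+1}{m+1-k}$: the crude integral estimate only yields $\log\frac{m}{m-k}$, which diverges as $k\to m$ even though the true harmonic sum is $\Theta(\log m)$ there. Retaining the ``$+1$'' shift (equivalently, pulling the leading reciprocal term out before integrating) is what keeps the bound finite and order-correct across the whole range $1\le k\le m$; verifying $\frac{1}{m+1-k}+\log\frac{m}{m+1-k}=O(\log\frac{m+1}{m+1-k})$ uniformly in $k$ is the only genuinely nontrivial estimate, the rest being routine bookkeeping and a union bound.
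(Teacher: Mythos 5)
Your proposal is correct and takes essentially the same route as the paper: $k$ sequential calls to AL-Q-IK with error parameter $\delta/k$, a union bound for correctness (you additionally verify the precondition $\lambda\le\mathcal{F}_t^{-1}(1-\rho_t)$ by counting surviving good arms, which the paper silently assumes), and summation of the per-call bounds $O(\frac{1}{\epsilon^2}(\frac{n+1-t}{m+1-t}+\log\frac{k}{\delta}))$. The only difference is the final estimate: the paper converts each reciprocal via $\frac{1}{m+1-t}\le\frac{1}{\log 2}\log\frac{m+2-t}{m+1-t}$ (from $\log(1+x)\ge x\log 2$ on $(0,1]$) and telescopes directly to $\frac{1}{\log 2}\log\frac{m+1}{m+1-k}$, whereas you use the harmonic/integral bound $\frac{1}{m+1-k}+\log\frac{m}{m+1-k}$ and then absorb the reciprocal term—a step that is indeed valid, since $\log\frac{m+1}{m+1-k}\ge\log(1+\frac{1}{m+1-k})\ge\frac{\log 2}{m+1-k}$, i.e., it rests on the very same elementary inequality the paper applies term by term.
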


\textbf{Remark:} If $k\leq cm$ for some constant $c<1$, $\log\frac{m+1}{m+1-k}\leq \frac{k}{m+1-k} = O(\frac{k}{m})$, and thus, the expected sample complexity becomes $O(\frac{k}{\epsilon^2}(\frac{k}{m}+\log\frac{k}{\delta}))$, meeting the lower bound (Theorem~\ref{LB-Q-FK}). When $k$ is arbitrarily close to $m$, the Q-FK problem (almost) reduces to the KE problem. The tightest upper bound for the KE problem (with the knowledge of $\lambda_{[k]}$) is $O(\frac{n}{\epsilon^2}\log\frac{k}{\delta})$ \citep{FULowerBound2012} to our best knowledge. When $k$ is arbitrary close to $m$, as $O(\frac{1}{\epsilon^2}(n\log\frac{m+1}{m+1-k}+k\log\frac{k}{\delta}))=O(\frac{1}{\epsilon^2}(n\log{k}+k\log\frac{k}{\delta}))$, AL-Q-FK is still better than the literature asymptotically.

\textbf{Q-FU.} AL-Q-FU (Algorithm for solving the Q-FU problem) is presented in Algorithm~\ref{AL-Q-FU}. Its idea follows AL-Q-IU and AL-Q-FK. We only consider the case $k<\frac{m}{2}$. For $k\geq\frac{m}{2}$, it is better to use KE algorithms instead. Corollary~\ref{TP-AL-Q-FU} states its theoretical performance, which directly follows from Theorems~\ref{TP-AL-Q-IU} and \ref{TP-AL-Q-FK}.

\begin{algorithm}[h]
	\caption{AL-Q-FU$(\mathcal{S}, m, k, \epsilon, \delta)$}\label{AL-Q-FU}
	\hspace*{\algorithmicindent} \textbf{Require:} $\mathcal{S}$ $n$-sized; $2k < m\leq n/2$;
	\begin{algorithmic}[1]
		\State $\hat{\lambda}\gets$LE$(\mathcal{T(S)},\frac{m}{n},\frac{\epsilon}{2},\frac{\delta}{2})$;
		\State \Return{AL-Q-FU$(\mathcal{S},\lfloor\frac{m}{2}\rfloor,k,\frac{\epsilon}{2},\frac{\delta}{2})$};
	\end{algorithmic}
\end{algorithm}
\begin{corollary}[Theoretical Performance of AL-Q-FU]\label{TP-AL-Q-FU}
	With probability at least $1-\delta$, AL-Q-FU returns $k$ distinct $(\epsilon, m)$-optimal arms. With probability at least $1-\frac{\delta}{2}$, the expected number of samples AL-Q-FU takes is at most $O(\frac{1}{\epsilon^2}(\frac{n}{m}\log^2\frac{1}{\delta} +n\log\frac{m+2}{m+2-2k}+k\log\frac{k}{\delta}))$.
\end{corollary}

\textbf{Remark:} By Corollary~\ref{LB-Q-FU}, when $k\leq cm$ for some constant $c\in(0,\frac{1}{2})$, AL-Q-FU is sample complexity optimal up to a $\log\frac{1}{\delta}$ factor. If $\log\frac{1}{\delta} = O(k)$ also holds, i.e., $\delta\geq e^{-ck}$ for some constant $c>0$, then AL-Q-FU is sample complexity optimal in order sense.

\section{NUMERICAL RESULTS}\label{sec:NR}

In this section, we illustrate the improvements of our algorithms by running numerical experiments. We present the comparisons of CBB algorithms. The results of the non-CBB algorithms are presented in supplementary materials. Additional numerical results for the finite cases are also presented in supplementary materials. We first compare CB-AL-Q-IK with the literature, and then illustrate the comparison of CB-AL-Q-IU with previous works.

In the simulations, we adopt Bernoulli rewards for all the arms. For fair comparisons, for all CBB-algorithms or versions, we use the KL-Divergence based confidence bounds given by \citet{QuantileExploration2018} with $\gamma=2$. Every point in every figure is averaged over 100 independent trials. Previous works only considered the case where $k=1$. In the implementations, for $k>1$, we repeat them for $k$ times with $\frac{\delta}{k}$ error probability for each repetition.

First, we compare CBB algorithms for the Q-IK problem: CB-AL-Q-IK (choose $\epsilon_1=0.75\epsilon$) and $(\alpha,\epsilon)$-KL-LUCB \citep{QuantileExploration2018} (we name it KL-LUCB in this section). KL-LUCB is almost equivalent to $\mathcal{P}_2$ \citep{Q-IU2017} with a large enough batch size. The only difference is that they choose different confidence bounds. Here, we note that KL-LUCB does not require the knowledge of $\lambda_\rho$. However, we want to show that our algorithm along with this information can significantly reduce the actual number of samples needed. The priors $\mathcal{F}$ are all Uniform([0,1]). The results are summarized in Figure~\ref{fig:KLComparison} (a)-(d). 
\begin{figure}[bht]\centering
	\begin{subfigure}[b]{0.23\textwidth}
		\includegraphics[scale=0.5]{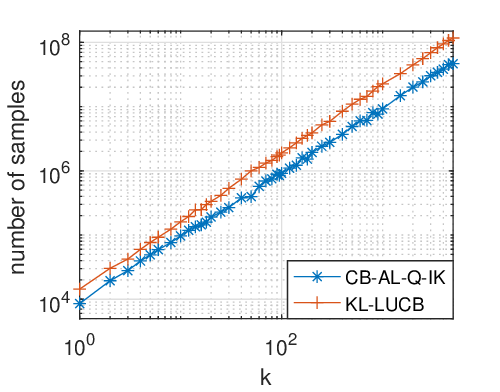}
		\caption{Vary $k$, $\rho=0.001$,$\epsilon=0.05$, and $\delta=0.001$.}
	\end{subfigure}
	\begin{subfigure}[b]{0.23\textwidth}
		\includegraphics[scale=0.5]{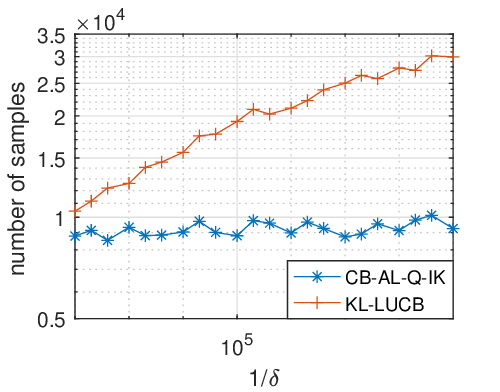}
		\caption{Vary $\delta$, $k=1$, $\rho=0.001$, and $\epsilon=0.05$.}
	\end{subfigure}\\
	\begin{subfigure}[b]{0.23\textwidth}
		\includegraphics[scale=0.5]{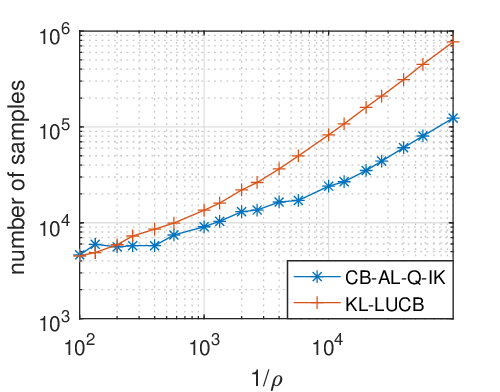}
		\caption{Vary $\rho$, $k=1$, $\epsilon=0.05$, and $\delta=0.001$.}
	\end{subfigure}
	\begin{subfigure}[b]{0.23\textwidth}
		\includegraphics[scale=0.5]{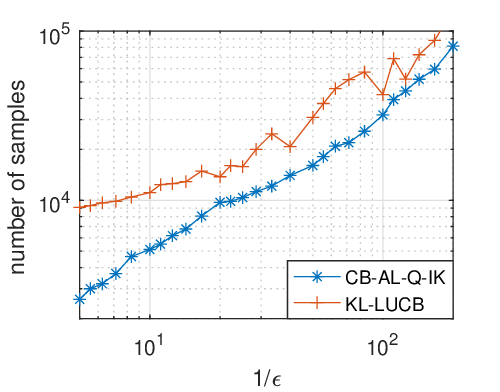}
		\caption{Vary $\epsilon$, $k=1$, $\rho=0.001$, and $\delta=0.001$.}
	\end{subfigure}
	\caption{Comparison of CB-AL-Q-IK and KL-LUCB.}\label{fig:KLComparison}
\end{figure}

It can be seen from Figure~\ref{fig:KLComparison} that CB-AL-Q-IK performs better than KL-LUCB except two or three points where $\rho$ is large. According to (a), the number of samples CB-AL-Q-IK takes increases slightly slower than KL-LUCB, consistent with the theory that CB-AL-Q-IK depends on $k\log{k}$ while KL-LUCB depends on $k\log^2{k}$. According to (b), we can see that KL-LUCB's number of samples increases obviously with $\frac{1}{\delta}$, while that of CB-AL-Q-IK is almost independent of $\delta$. The reason is that CB-AL-Q-IK depends on $(\frac{1}{\rho}\log\frac{1}{\rho}+\log\frac{1}{\delta})$ term, and when $\rho$ is small enough, $\log\frac{1}{\delta}$ can be dominated by $\frac{1}{\rho}\log\frac{1}{\rho}$. According to (c), CB-AL-Q-IK takes less samples than KL-LUCB for $\rho<0.005$, and the gap increases with $\frac{1}{\rho}$. According to (d), CB-AL-Q-IK performs better than KL-LUCB under the given $\epsilon$ values.

\begin{wrapfigure}{r}{0.23\textwidth}
	\centering
	\includegraphics[scale=0.5]{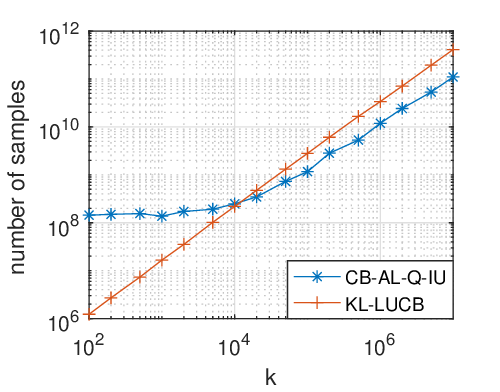}
	\caption{Comparison of CB-AL-Q-IU and KL-LUCB under prior $\mathcal{F}_h$. $\rho=0.05$, $\epsilon=0.1$, and $\delta=0.01$.}\label{fig:Q-IU}
\end{wrapfigure}  Second, we compare CB-AL-Q-IU and $(\alpha,\epsilon)$-KL-LUCB.
CB-AL-Q-IU is the CBB version of AL-Q-IU by replacing its subroutines by CBB ones. It is designed for large $k$-values and may not perform well under small $k$-values, even if it is always in order-sense better or equivalent compared to KL-LUCB.
The reason is that its subroutine LE has a large constant factor. However, since the sample complexities of these two algorithms both depend at least linearly on $k$ while that of LE is independent of $k$, 
when $k$ is large, the influence of (CB-)LE (the CBB version of LE) vanishes,	
and the improvement of (CB-)AL-Q-IK emerge. The results are summarized in Figure~\ref{fig:Q-IU}.  In the implementations, we choose $\epsilon_1=0.75\epsilon$ and $\epsilon_2=0.1\epsilon$ for algorithm LE, and choose $\epsilon_1=0.75\epsilon$ for CB-AL-Q-IK. In Figure~\ref{fig:Q-IU}, the algorithms are tested under a ``hard instance" $\mathcal{F}_h$, where $\rho$ fraction of the arms has expected reward $\frac{1}{2}+0.55\epsilon$ and the others have $\frac{1}{2}-0.55\epsilon$. The results are consistent with the theory, and suggest that CB-AL-Q-IK can use much less samples than KL-LUCB does when $k$ is sufficiently large.	

We admit that AL-Q-IU may not be practical as it takes $10^8$ samples even for $k=1$, but it also has several contributions: (i) It gives a hint for solving the Q-IU problem. If we can improve LE, we can get a practical algorithm for the Q-IU problem that works much better than the literature for large $k$-values. (ii) We can see from Figure~\ref{fig:Q-IU} that KL-LUCB increases faster as $k$ increases. It is consistent with the theory that KL-LUCB depends on $k\log^2{k}$ while (CB-)AL-Q-IU depends on $k\log{k}$. When $k$ is large enough, (CB)-AL-Q-IU can perform better. (iii) In order sense, the performance of (CB-)AL-Q-IU is better than the literature. Thus, our work gives better theoretical insights about the Q-IU problem.

\section{CONCLUSION}\label{sec:con}
In this paper, we studied the problems of finding $k$ top $\rho$ fraction arms with an $\epsilon$ bounded error from a finite or infinite arm set. We considered both cases where the thresholds (i.e., $\lambda_\rho$ and $\lambda_{[m]}$) are priorly known and unknown. We derived lower bounds on the sample complexity for all four settings, and proposed algorithms for them. For the Q-IK and Q-FK problems, our algorithms match the lower bounds. For the Q-IU and Q-FU problems, our algorithms are sample complexity optimal up to a log factor. Our simulations also confirm these improvements numerically.

\subsubsection*{Acknowledgements}
This work has been supported in part by NSF ECCS-1818791, CCF-1758736, CNS-1758757, CNS-1446582, CNS-1314538, CNS-1717060; ONR N00014-17-1-2417; AFRL FA8750-18-1-0107.

\bibliography{topmk}

\begin{thebibliography}{}

\bibitem[Agarwal et~al., 2017]{LimitedAdaptivity2017}
Agarwal, A., Agarwal, S., Assadi, S., and Khanna, S. (2017).
\newblock Learning with limited rounds of adaptivity: Coin tossing, multi-armed
  bandits, and ranking from pairwise comparisons.
\newblock In {\em Conference on Learning Theory}.

\bibitem[Agrawal and Goyal, 2012]{TompsonSampling2012}
Agrawal, S. and Goyal, N. (2012).
\newblock Analysis of thompson sampling for the multi-armed bandit problem.
\newblock In {\em Conference on Learning Theory}.

\bibitem[Arratia and Gordon, 1989]{BinomialKLBound1989}
Arratia, R. and Gordon, L. (1989).
\newblock Tutorial on large deviations for the binomial distribution.
\newblock {\em Bulletin of Mathematical Biology}.

\bibitem[Audibert and Bubeck, 2010]{LimitedRounds2010}
Audibert, J. and Bubeck, S. (2010).
\newblock Best arm identification in multi-armed bandits.
\newblock In {\em Conference on Learning Theory}.

\bibitem[Auer et~al., 2002]{UCBRegret2002}
Auer, P., Cesa-Bianchi, N., and Fischer, P. (2002).
\newblock Finite-time analysis of the multiarmed bandit problem.
\newblock {\em Machine Learning}.

\bibitem[Auer and Ortner, 2010]{UCBRevisited2010}
Auer, P. and Ortner, R. (2010).
\newblock {UCB} revisited: Improved regret bounds for the stochastic
  multi-armed bandit problem.
\newblock {\em Periodica Mathematica Hungarica}.

\bibitem[Aziz et~al., 2018]{QuantileExploration2018}
Aziz, M., Anderton, J., Kaufmann, E., and Aslam, J. (2018).
\newblock Pure exploration in infinitely-armed bandit models with
  fixed-confidence.
\newblock In {\em Algorithmic Learning Theory}.

\bibitem[Berry and Eick, 1995]{ClinicalTrials1995}
Berry, D.~A. and Eick, S.~G. (1995).
\newblock Adaptive assignment versus balanced randomization in clinical trials:
  a decision analysis.
\newblock {\em Statistics in medicine}.

\bibitem[Berry and Fristedt, 1985]{EarlyBandit1985}
Berry, D.~A. and Fristedt, B. (1985).
\newblock Bandit problems: sequential allocation of experiments (monographs on
  statistics and applied probability).
\newblock {\em London: Chapman and Hall}.

\bibitem[Bubeck and Cesa-Bianchi, 2012]{AdaptiveRouting2012}
Bubeck, S. and Cesa-Bianchi, N. (2012).
\newblock Regret analysis of stochastic and nonstochastic multi-armed bandit
  problems.
\newblock {\em Foundations and Trends{\textregistered} in Machine Learning}.

\bibitem[Bubeck et~al., 2011]{SimpleRegret2011}
Bubeck, S., Munos, R., and Stoltz, G. (2011).
\newblock Pure exploration in finitely-armed and continuous-armed bandits.
\newblock {\em Theoretical Computer Science}.

\bibitem[Buccapatnam et~al., 2017]{NetworkingLiu2017}
Buccapatnam, S., Liu, F., Eryilmaz, A., and Shroff, N.~B. (2017).
\newblock Reward maximization under uncertainty: Leveraging side-observations
  on networks.
\newblock {\em The Journal of Machine Learning Research}.

\bibitem[Cao et~al., 2015]{OnTopK2015}
Cao, W., Li, J., Tao, Y., and Li, Z. (2015).
\newblock On top-k selection in multi-armed bandits and hidden bipartite
  graphs.
\newblock In {\em Advances in Neural Information Processing Systems}.

\bibitem[Carpentier and Valko, 2015]{SimpleRegret2015}
Carpentier, A. and Valko, M. (2015).
\newblock Simple regret for infinitely many armed bandits.
\newblock In {\em International Conference on Machine Learning}.

\bibitem[Chaudhuri and Kalyanakrishnan, 2017]{Q-IU2017}
Chaudhuri, A.~R. and Kalyanakrishnan, S. (2017).
\newblock {PAC} identification of a bandit arm relative to a reward quantile.
\newblock In {\em AAAI}.

\bibitem[Chen et~al., 2016]{GCL2016}
Chen, L., Gupta, A., and Li, J. (2016).
\newblock Pure exploration of multi-armed bandit under matroid constraints.
\newblock In {\em Conference on Learning Theory}.

\bibitem[Even-Dar et~al., 2002]{MedianElimination2002}
Even-Dar, E., Mannor, S., and Mansour, Y. (2002).
\newblock {PAC} bounds for multi-armed bandit and markov decision processes.
\newblock In {\em International Conference on Computational Learning Theory}.
  Springer.

\bibitem[Garivier and Capp{\'e}, 2011]{ChernoffInformation2011}
Garivier, A. and Capp{\'e}, O. (2011).
\newblock The {KL-UCB} algorithm for bounded stochastic bandits and beyond.
\newblock In {\em Conference on Learning Theory}.

\bibitem[Goschin et~al., 2013]{Q-IK2013}
Goschin, S., Weinstein, A., Littman, M.~L., and Chastain, E. (2013).
\newblock Planning in reward-rich domains via {PAC} bandits.
\newblock In {\em European Workshop on Reinforcement Learning}.

\bibitem[Jamieson et~al., 2014]{LIL2014}
Jamieson, K., Malloy, M., Nowak, R., and Bubeck, S. (2014).
\newblock lil{'UCB}: An optimal exploration algorithm for multi-armed bandits.
\newblock In {\em Conference on Learning Theory}.

\bibitem[Kalyanakrishnan and Stone, 2010]{Halving2010}
Kalyanakrishnan, S. and Stone, P. (2010).
\newblock Efficient selection of multiple bandit arms: Theory and practice.
\newblock In {\em International Conference on Machine Learning}.

\bibitem[Kalyanakrishnan et~al., 2012]{FULowerBound2012}
Kalyanakrishnan, S., Tewari, A., Auer, P., and Stone, P. (2012).
\newblock {PAC} subset selection in stochastic multi-armed bandits.
\newblock In {\em International Conference on Machine Learning}.

\bibitem[Kaufmann and Kalyanakrishnan, 2013]{KLLUCB2013}
Kaufmann, E. and Kalyanakrishnan, S. (2013).
\newblock Information complexity in bandit subset selection.
\newblock In {\em Conference on Learning Theory}.

\bibitem[Kohavi et~al., 2009]{ProductTesting2009}
Kohavi, R., Longbotham, R., Sommerfield, D., and Henne, R.~M. (2009).
\newblock Controlled experiments on the web: survey and practical guide.
\newblock {\em Data Mining and Knowledge Discovery}.

\bibitem[Li et~al., 2010]{OnlineAdvertising2010}
Li, L., Chu, W., Langford, J., and Schapire, R.~E. (2010).
\newblock A contextual-bandit approach to personalized news article
  recommendation.
\newblock In {\em Proceedings of the 19th International Conference on World
  Wide Web}. ACM.

\bibitem[Liu et~al., 2018]{UCBoost2018}
Liu, F., Wang, S., Buccapatnam, S., and Shroff, N.~B. (2018).
\newblock {UCB}oost: a boosting approach to tame complexity and optimality for
  stochastic bandits.
\newblock In {\em International Joint Conference on Artificial Intelligence}.
  AAAI Press.

\bibitem[Mannor and Tsitsiklis, 2004]{FKLowerBound2004}
Mannor, S. and Tsitsiklis, J.~N. (2004).
\newblock The sample complexity of exploration in the multi-armed bandit
  problem.
\newblock {\em Journal of Machine Learning Research}.

\bibitem[Scott, 2010]{ProductTesting2010}
Scott, S.~L. (2010).
\newblock A modern bayesian look at the multi-armed bandit.
\newblock {\em Applied Stochastic Models in Business and Industry}.

\end{thebibliography}
\clearpage

\textbf{\scalebox{1.85}{Supplementary Materials}}
\section{PROOF OF THEOREM~\ref{LB-Q-FK}}
\textsl{Proof.} \textbf{For }$\mathbf{k=1}$. We first prove the lower bound for $k=1$.

\begin{claim}[Lower bound for Q-FK with $k=1$]\label{LB-Q-IK-K1}
	There is a priorly known $n$-sized set such that after randomly reordering, to find an $(\epsilon,m)$-optimal arm, any algorithm must use $\Omega(\frac{1}{\epsilon^2}(\frac{n}{m}+\log{\frac{1}\delta}))$ samples in expectation.
\end{claim}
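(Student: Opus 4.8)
The plan is to reduce the $k=1$ Q-FK problem to the plain PAC best-arm problem on a smaller set, and then invoke Theorem~13 of \citep{FKLowerBound2004} as a black box. That theorem provides an $N$-armed instance $\mathcal{H}$ on which finding an $(\epsilon,1)$-optimal arm forces $\Omega(\frac{1}{\epsilon^2}(N+\log\frac{1}{\delta}))$ samples in expectation. I would take $N=\lfloor n/m\rfloor$, so that FK's bound for $\mathcal{H}$ is exactly the target $\Omega(\frac{1}{\epsilon^2}(\frac{n}{m}+\log\frac{1}{\delta}))$. The whole argument is then a hardness-transfer: a cheap Q-FK solver on a suitable $n$-armed set would yield a cheap best-arm solver on $\mathcal{H}$, contradicting FK.

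Construction of the hard $n$-armed instance: I would let $\mathcal{H}$ be FK's hard $N$-armed instance, whose top mean $\mu^\ast$ is separated from every non-$(\epsilon,1)$-optimal arm by more than $\epsilon$, and replace each of its $N$ arms by $m$ statistically identical copies, padding with low-mean dummy arms (means well below $\mu^\ast-\epsilon$) to reach exactly $n$ arms when $Nm<n$. In the resulting set $\mathcal{S}$ at least $m$ arms attain the top mean $\mu^\ast$, so the $m$-th largest mean is $\lambda_{[m]}=\mu^\ast$, and the dummies neither enter the top $m$ nor become $(\epsilon,m)$-optimal. Hence a copy of $\mathcal{H}$-arm $i$ is $(\epsilon,m)$-optimal in $\mathcal{S}$, i.e. has mean $\ge\mu^\ast-\epsilon$, precisely when $\mathcal{H}$-arm $i$ is $(\epsilon,1)$-optimal; the $\epsilon$-gap in $\mathcal{H}$ makes this correspondence exact.

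The simulation: suppose an algorithm $\mathcal{A}$ solves Q-FK on $\mathcal{S}$ with error $\le\delta$. I would convert it into an algorithm $\mathcal{B}$ for $\mathcal{H}$ by fixing a partition of the $n$ virtual arms into $N$ groups of size $m$ (group $i=$ copies of $\mathcal{H}$-arm $i$) together with the dummies. Whenever $\mathcal{A}$ samples a virtual arm in group $i$, $\mathcal{B}$ draws a fresh sample of $\mathcal{H}$-arm $i$ and returns it, while dummy samples are generated internally at no cost to $\mathcal{H}$. Because copies are i.i.d.\ with the same law as their originals and samples are independent across arms and time, this reproduces $\mathcal{A}$'s run on the true $\mathcal{S}$ exactly, and $\mathcal{B}$ draws no more samples from $\mathcal{H}$ than $\mathcal{A}$ draws in total. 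When $\mathcal{A}$ outputs an arm in group $j$, $\mathcal{B}$ outputs $\mathcal{H}$-arm $j$; by the correspondence above $\mathcal{B}$ is $(\epsilon,1)$-correct whenever $\mathcal{A}$ is $(\epsilon,m)$-correct, so $\mathcal{B}$ has error $\le\delta$. FK's lower bound for $\mathcal{H}$ then forces $\mathcal{A}$ to spend $\Omega(\frac{1}{\epsilon^2}(\frac{n}{m}+\log\frac{1}{\delta}))$ samples on $\mathcal{S}$, which is the claim.

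The main obstacle is making the reduction airtight rather than any arithmetic. I must verify that the copy-and-pad construction really places $\lambda_{[m]}$ at $\mu^\ast$ (so that $(\epsilon,m)$- and $(\epsilon,1)$-optimality coincide), that FK's instance is invoked only within its stated regime for $\epsilon$ and $\delta$, and that the rounding from $Nm$ to exactly $n$ introduces no new $(\epsilon,m)$-optimal arm and does not disturb the $m$-th order statistic. Confirming that the simulated process is distributionally identical to $\mathcal{A}$ running on $\mathcal{S}$, so that the sample-count inequality transfers cleanly in the direction needed for a lower bound, is where I would spend the most care.
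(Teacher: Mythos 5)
Your proposal is correct and matches the paper's own argument essentially step for step: both invoke Theorem~13 of \citet{FKLowerBound2004} on an $\lfloor n/m\rfloor$-armed hard instance, create $m$ statistically identical copies of each arm (simulated by drawing fresh samples from the original whenever a copy is queried), pad with low-mean ``fake'' arms to reach size $n$ so that $\lambda_{[m]}$ coincides with the top mean, and transfer the returned arm back through the grouping map to contradict the best-arm lower bound. The points you flag as needing care (the padding not disturbing the $m$-th order statistic, and the simulation being distributionally exact) are handled in the paper exactly as you describe, so there is no gap.
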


\begin{proof}Let parameters $n$, $m$, $\epsilon$, and $\delta$ be given. For these parameters, by contradiction, suppose there is an algorithm $\mathcal{A}_1$ that solves every Q-FK instance with average sample complexity $o(\frac{1}{\epsilon^2}(\frac{n}{m} + \log{\frac{1}\delta}))$. To this end, we first introduce the following problem $\mathcal{P}_1$. 
	
	\textbf{Problem} $\mathcal{P}_1$: Given $\lfloor n/m \rfloor$ coins, where a toss of coin $i$ has an unknown probability $p_i$ to produce a head, and produce a tail otherwise. We name $p_i$ the ``head probability" of coin $i$. Let $p_{max}$ be the largest one among all $p_i$'s. Knowing the value of $p_{max}$, we want to find a coin whose head probability is no less than $p_{max}-\epsilon$, and the error probability is no more than $\delta$. 
	
	\citet[Theorem~13]{FKLowerBound2004} proved that the worst case sample complexity lower bound of $\mathcal{P}_1$ is $\Omega(\frac{1}{\epsilon^2}(\frac{n}{m} + \log{\frac{1}\delta}))$. Particularly, this lower bound can be met by the $\lfloor n/m \rfloor$-sized set $\{\frac{1}{2}+\epsilon,\frac{1}{2}-\epsilon,\frac{1}{2}-\epsilon,...,\frac{1}{2}-\epsilon\}$. Here, we will show that we can construct an algorithm from $\mathcal{A}_1$ that solves $\mathcal{P}_1$ with average sample complexity $o(\frac{1}{\epsilon^2}(\frac{n}{m} + \log{\frac{1}\delta}))$, implying a contradiction. 
	
	Let $\mathcal{C}_1$ be the set of the coins in $\mathcal{P}_1$. Before solving $\mathcal{P}_1$ by using $\mathcal{A}_1$, we need to do several operations over $\mathcal{C}_1$. We first define the ``duplication" of a coin. For each coin $i$, we ``duplicate" it for $m-1$ times and construct $m-1$ ``duplicated" coins such that whenever one wants to toss a duplication of coin $i$, coin $i$ will be tossed but the result is regarded as that of the duplication. Thus, we guarantee that all the duplications of coin $i$ have the same head probability as coin $i$. 
	
	With these duplications, we construct a new set $\mathcal{C}_2$ of coins with size $n$. $\mathcal{C}_2$ consists of all the coins of $\mathcal{C}_1$, all the duplications of all coins in set $\mathcal{C}_1$, and $(n-m\lfloor n/m \rfloor)$ \textit{negligible} coins (negligible coins are with head probability zero). Obviously, $\mathcal{C}_2$ consists of $n$ coins. In $\mathcal{P}_1$, for each head probability $p_i$, there are $m$ coins with head probability $p_i$ in $\mathcal{C}_2$. The negligible coins are used to make the size of $\mathcal{C}_2$ be $n$. 
	
	Then, we perform $\mathcal{A}_1$ on the set $\mathcal{C}_2$. It returns an $(\epsilon, m)$-optimal coin (coins can be regarded as arms with Bernoulli($p_i$) rewards) of $\mathcal{C}_2$ with probability at least $1-\delta$, and uses $o(\frac{1}{\epsilon^2}(\frac{1}{\rho}+\log\frac{1}{\delta}))$ samples in expectation. We use $c_r$ to denote the returned coin. Let coin $i^*$ be one of the coins whose head probability are $p_{max}$ (i.e., one of the most biased coins of $\mathcal{C}_1$). Since coin $i^*$ is duplicated for $m-1$ times, there are at least $m$ coins in $\mathcal{C}_2$ having head probability $p_{max}$. This implies that if $c_r$ is an $(\epsilon, m)$-optimal coin of $\mathcal{C}_2$, then its head probability is at least $p_{max}-\epsilon$. If $c_r$ is a negligible coin (i.e., with head probability zero), we return a random coin of $\mathcal{C}_1$ as the solution of $\mathcal{P}_1$. If $c_r$ is coin $i$ or one of its duplications, we return coin $i$ as the solution of $\mathcal{P}_1$. Noting that the negligible coins are not $(\epsilon,m)$-optimal, so if $c_r$ is an $(\epsilon,m)$-optimal coin of $\mathcal{C}_2$, there is a corresponding coin in $\mathcal{C}_1$ having the same probability as $c_r$. Thus, if $\mathcal{A}_1$ finds an $(\epsilon,m)$-coin of $\mathcal{C}_2$, it finds a coin of $\mathcal{C}_1$ whose head probability is at least $p_{max}-\epsilon$, which gives a correct solution to $\mathcal{P}_1$. To conclude, $\mathcal{A}_1$ solves $\mathcal{P}_1$ with average sample complexity $o(\frac{1}{\epsilon^2}(\frac{n}{m}+\log{\frac{1}\delta}))$, contradicting Theorem~13 \citep{FKLowerBound2004}. We note that we can choose $\mathcal{C}_1=\{\frac{1}{2}+\epsilon,\frac{1}{2}-\epsilon,...,\frac{1}{2}-\epsilon\}$ by \cite[Theorem~13]{FKLowerBound2004}, and thus, $\mathcal{C}_2$ is priorly known. This completes the proof of Claim~\ref{LB-Q-IK-K1}. 
\end{proof}

\textbf{For }$\mathbf{k>1}$. Now, we consider the case where $k>1$. From now on, we only consider the case where $m>2k$. For $m\leq 2k$, since the Q-IK problem does not become harder as $m$ increases, if the desired lower bound holds for $m>2k$, it also holds for $m\leq 2k$.

Let $\mathcal{C}_3$ be a priorly known $\lfloor \frac{n}{2k} \rfloor$-sized set such that after randomly reordering it, no algorithm can find one $(\epsilon, \lfloor \frac{m}{2k} \rfloor)$-optimal arm of it with probability $1-\delta$ by $o(\frac{1}{\epsilon^2}(\frac{n}{m} + \log{\frac{1}\delta}))$ samples in expectation, i.e., $\mathcal{C}_3$ meets the lower bound given in Claim~\ref{LB-Q-IK-K1}. Claim~\ref{LB-Q-IK-K1} guarantees that this set must exist. Choose a large enough positive integer $L$. By randomly reordering the indexes of arms in $\mathcal{C}_3$, we can construct $L$ sets that also meet the lower bound stated in Claim~\ref{LB-Q-IK-K1}. We refer to these sets as \textit{hard} sets. Now, we define problem $\mathcal{P}_2$ by these $L$ \textit{hard} sets.

\textbf{Problem} $\mathcal{P}_2$: Given the above $L$ \textit{hard} sets, we want to find $k$ distinct arms such that each of them is $(\epsilon, \lfloor \frac{m}{2k} \rfloor)$-optimal for a different \textit{hard} set, and the error probability is no more than $\delta$ in total.

\begin{claim}[Lower bound for $\mathcal{P}_2$]\label{LB-P2}
	To solve $\mathcal{P}_2$, at least $\Omega(\frac{k}{\epsilon^2}(\frac{n}{m}+\log\frac{k}{\delta}))$ samples are needed in expectation.
\end{claim}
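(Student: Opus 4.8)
The plan is to reduce solving $\mathcal{P}_2$ to the single-set lower bound of Claim~\ref{LB-Q-IK-K1} applied to each of the $L$ hard sets, and then to argue that the $L$ per-set costs essentially add up. Fix any algorithm $\mathcal{B}$ that solves $\mathcal{P}_2$ with total error probability at most $\delta$, let $T_i$ be the expected number of samples $\mathcal{B}$ draws from the $i$-th hard set, so its expected sample complexity is $T=\sum_{i=1}^L T_i$, and let $\delta_i$ be the probability that $\mathcal{B}$ returns an arm that is not $(\epsilon,\lfloor m/k\rfloor)$-optimal for set $i$. First I would observe that, because the $L$ sets are produced by independent random reorderings of $\mathcal{C}_3$, the location of a good arm in set $i$ is independent of the arms in every other set; hence samples drawn from sets $j\neq i$ carry no information about set $i$, and the restriction of $\mathcal{B}$ to the samples and output of set $i$ is a legitimate single-set solver with error $\delta_i$. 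The defining property of the hard sets, i.e.\ Claim~\ref{LB-Q-IK-K1}, then yields for every $i$ the per-set bound $T_i=\Omega\big(\frac1{\epsilon^2}(\frac nm+\log\frac1{\delta_i})\big)$.

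Summing the exploration parts over the $L$ sets immediately gives the first term $\sum_i T_i=\Omega(\frac{L}{\epsilon^2}\cdot\frac nm)$, so the only real work is to show that the confidence parts contribute $\Omega(\frac{L}{\epsilon^2}\log\frac L\delta)$, that is $\sum_{i=1}^L\log\frac1{\delta_i}=\Omega(L\log\frac L\delta)$. The global requirement $\mathbbm{P}\{\text{all }L\text{ answers correct}\}\geq 1-\delta$ must be turned into a usable constraint on the profile $(\delta_1,\dots,\delta_L)$. The clean case is when it forces $\sum_i\delta_i=O(\delta)$: under that constraint, minimizing the convex objective $\sum_i\log\frac1{\delta_i}$ subject to $\sum_i\delta_i\le O(\delta)$ is attained at the balanced profile $\delta_i\asymp\delta/L$, which gives exactly $\sum_i\log\frac1{\delta_i}=\Omega(L\log\frac L\delta)$ and hence the claim.

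The hard part is precisely this conversion, because $\mathbbm{P}\{\text{all correct}\}\geq 1-\delta$ only controls $\mathbbm{P}\{\bigcup_i E_i\}\le\delta$, and a naive union bound bounds merely $\max_i\delta_i\le\delta$, which loses the $\log L$ improvement and yields only the weaker $\Omega(\frac L{\epsilon^2}\log\frac1\delta)$. To recover the extra $\log L$ factor I would exploit the independence of the reordered instances via a dichotomy on the under-sampled sets. By Markov's inequality at least $L/2$ indices satisfy $T_i\le 2T/L$; call this collection $B$, so that the single-set bound forces $\delta_i\ge e^{-c\epsilon^2 T/L}$ for each $i\in B$. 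Because the locations $\{\theta_i\}_{i\in B}$ are mutually independent and, for a sample-starved set, the event $E_i$ is driven only by set $i$'s own (independent) samples, the simultaneous-correctness probability over $B$ cannot exceed roughly $\prod_{i\in B}(1-\delta_i)$ unless $\mathcal{B}$ has spent enough extra samples to manufacture positive correlation among the $E_i$ — samples that are themselves charged to $T$. Either way one obtains $1-\delta\le\mathbbm{P}\{\text{all of }B\text{ correct}\}\le e^{-|B|\min_{i\in B}\delta_i}$ up to lower-order corrections, which together with $\min_{i\in B}\delta_i\ge e^{-c\epsilon^2 T/L}$ rearranges to $T/L=\Omega(\frac1{\epsilon^2}\log\frac L\delta)$. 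Making this ``independence versus extra samples'' dichotomy fully rigorous — controlling the joint law of the error events under an adaptive, data-dependent sample allocation — is the main obstacle, and the random reordering in the construction is exactly what renders the per-set instances exchangeable and independent so that this step can be pushed through. Combining the two terms yields $T=\Omega\big(\frac L{\epsilon^2}(\frac nm+\log\frac L\delta)\big)$, as claimed.
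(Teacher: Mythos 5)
Your overall skeleton is the same as the paper's: decompose the total cost into per-set expected sample counts, invoke Claim~\ref{LB-Q-IK-K1} on each hard set to get $T_i=\Omega\bigl(\frac{1}{\epsilon^2}(\frac{n}{m}+\log\frac{1}{\delta_i})\bigr)$, and then argue that the global correctness requirement forces an essentially balanced error profile $\delta_i\asymp\delta/L$, so that $\sum_i\log\frac{1}{\delta_i}=\Omega(L\log\frac{L}{\delta})$ by convexity and symmetry. The paper does exactly this, and it handles the step you struggle with by simply asserting the constraint $\prod_{i=1}^L(1-\delta_i)\geq 1-\delta$ --- i.e., it treats the per-set correctness events as if they were independent (plausible, since the $L$ instances are independent) --- and then runs the symmetric convex minimization that you call the ``clean case.''

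The genuine gap is the step you flag yourself, and it is worse than a missing detail: the central inequality of your ``independence versus extra samples'' dichotomy, $\mathbbm{P}\{\text{all of }B\text{ correct}\}\lesssim\prod_{i\in B}(1-\delta_i)$, is false for general algorithms, because correlating errors costs no samples. Consider an algorithm that flips one internal coin with heads probability $\delta$; on heads it outputs junk for every set (spending nothing), and on tails it solves every set to conditional error far below $\delta/L$. Its marginal errors are $\delta_i\approx\delta$, its error events are perfectly positively correlated, and $\mathbbm{P}\{\text{all correct}\}\approx 1-\delta$, which vastly exceeds $\prod_i(1-\delta_i)\approx e^{-L\delta}$. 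So no accounting of ``samples spent to manufacture correlation'' can rescue that inequality --- correlation is free; what is expensive is driving down the error \emph{conditional} on the runs where the algorithm genuinely attempts the sets. Capturing that requires opening up the single-set lower bound rather than using it as a black box: e.g., redoing the change-of-measure argument behind Theorem~13 of \citep{FKLowerBound2004} jointly over the product of the $L$ independent instances, perturbing one instance at a time, or equivalently defining the $\delta_i$ as suitable conditional error probabilities so that $\prod_i(1-\delta_i)\geq 1-\delta$ holds by construction and proving the per-set sample bound against those conditional quantities. To be fair, your instinct that this conversion is the crux is sound --- the paper's one-line assertion of the product constraint glosses over precisely the same issue --- but the paper commits to that constraint and completes the argument, whereas your proposal leaves the decisive step as an admitted obstacle and the heuristic offered in its place does not survive scrutiny; as written, it is not a proof.
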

\begin{proof}
	Let these $L$ hard instances be indexed by $1,2,...,L$. For each set $i$, by the definition of \textit{hard} sets, to find an $(\epsilon, \lfloor \frac{m}{2k} \rfloor)$-optimal arm from it with probability $1-\delta_i$, at least $\Omega(\frac{1}{\epsilon^2}(\frac{n}{m} + \log{\frac{1}\delta_i}))$ samples are needed in expectation. For an algorithm that solves $\mathcal{P}_2$, it returns $k$ arms, each of which belongs to a different \textit{hard} set. Without loss of generality, we say these $k$ returned arms belong to \textit{hard} sets $1,2,...,k$. Let $\delta_i$ denote the probability that the returned arm for \textit{hard} set $i$ is not $(\epsilon, \lfloor \frac{m}{2k} \rfloor)$-optimal. Obviously, to solve $\mathcal{P}_2$ with probability $1-\delta$, we need $\prod_{i=1}^k{(1-\delta_i)}\geq 1-\delta$. Further, since these sets are generated by randomly reordering a priorly known set $\mathcal{C}_3$, the samples of one set provide no information for the others. Thus, to solve $\mathcal{P}_2$, the expected sample complexity is at least
	\begin{align}\label{LB-KG1-sub}
	\Omega\left(\min\left\{\sum_{i=1}^k{\frac{1}{\epsilon^2}\log\frac{1}{\delta_i}}:\prod_{i=1}^k\left(1-\delta_i\right)\geq 1-\delta\right\}\right).
	\end{align}
	
	We note that the function $f(x)=\log(1/x)$ is convex, and thus, $\sum_{i=1}^k{\frac{1}{\epsilon^2}\log\frac{1}{\delta_i}}$ is convex over domain specified by the constraint $\prod_{i=1}^k\left(1-\delta_i\right)\geq 1-\delta$. Also, this constraint on $(\delta_i,i\in[k])$ is symmetric. By the property of convex functions, to get the minimal, we need to set $\delta_1=\delta_2=\cdots=\delta_k$. Thus, given $\prod_{i=1}^k\left(1-\delta_i\right)\geq 1-\delta$, we have
	\begin{align}\label{deltaLowerBound}
	\sum_{i=1}^k{\frac{1}{\epsilon^2}\log\frac{1}{\delta_i}} = \Omega\left(k\log\frac{k}{\delta}\right).
	\end{align}
	
	Applying Eq.~(\ref{deltaLowerBound}) to Eq.~(\ref{LB-KG1-sub}), we can get the desired lower bound. This completes the proof of Claim~\ref{LB-P2}.
\end{proof}

\begin{claim}\label{UP-A3}
	If there exists an algorithm $\mathcal{A}_2$ that can use $o(\frac{k}{\epsilon^2}(\frac{n}{m}+\log\frac{k}{\delta_1}))$ samples in expectation to find $k$ distinct $(\epsilon,m)$-optimal arms of any $n$-sized set with probability $1-\delta_1$ for $\delta_1\in(0,\delta]$, then we can construct another algorithm $\mathcal{A}_3$ that solves $\mathcal{P}_2$ by $o(\frac{k}{\epsilon^2}(\frac{n}{m}+\log\frac{k}{\delta}))$ samples in expectation.
\end{claim}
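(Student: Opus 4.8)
\noindent\textit{Proof plan.} The plan is to build $\mathcal{A}_3$ that repeatedly invokes $\mathcal{A}_2$ on merged copies of the \textsl{hard} sets and stitches the returned arms into a solution of $\mathcal{P}_2$. First I would merge any collection of $L'$ \textsl{hard} sets into one set $\mathcal{C}$ of size $N=L'\lfloor n/k\rfloor$. Since every \textsl{hard} set is a reindexing of the same $\mathcal{C}_3$, they share one multiset of means, so if $\theta$ denotes the $\lfloor m/k\rfloor$-th largest mean of a single \textsl{hard} set, then $\theta$ is exactly the $M$-th largest mean of $\mathcal{C}$ for $M=L'\lfloor m/k\rfloor$ (counting the $L'a$ arms strictly above $\theta$ and the $L'b$ arms equal to $\theta$, where $a<\lfloor m/k\rfloor\leq a+b$). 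Hence any $(\epsilon,M)$-optimal arm of $\mathcal{C}$ has mean at least $\theta-\epsilon$ and is therefore $(\epsilon,\lfloor m/k\rfloor)$-optimal inside whichever \textsl{hard} set it came from, while the ratio $N/M=\lfloor n/k\rfloor/\lfloor m/k\rfloor=O(n/m)$ is preserved. Thus one call to $\mathcal{A}_2$ on $\mathcal{C}$ yields $k$ distinct arms, each a valid answer for its own \textsl{hard} set; the only difficulty is that several of them may belong to the same \textsl{hard} set.

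Accordingly, $\mathcal{A}_3$ proceeds in rounds: merge the still-unsolved \textsl{hard} sets, run $\mathcal{A}_2$, record a solution for every unsolved set that receives one of the $k$ returned arms, delete those sets, and repeat until all $L$ are solved (padding the merge back up to $k$ sets with already-solved ones once fewer than $k$ remain, which keeps $N/M=O(n/m)$). Because the \textsl{hard} sets are independent random reindexings of $\mathcal{C}_3$, they are exchangeable, so conditioned on the within-set ranks of the $k$ returned arms their host sets are (near-)uniform; a birthday-type bound then shows that when $L'$ sets are active the expected number of collisions in a round is $O(k^2/L')$. Summing $1/L'$ over the decreasing active counts gives expected total wasted coverage $O(k\log L)$, and the final $O(k)$-set tail clears at a constant geometric rate, costing $O(\log k)$ extra rounds. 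By \eqref{ChoiceofL} we have $k\log L<L$, so the expected number of rounds is $O(L/k)$.

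It then remains to account for errors and samples. Capping the number of rounds at $O(L/k)$ and running each call of $\mathcal{A}_2$ with failure probability $\delta_1=\Theta(k\delta/L)\in(0,\delta]$ keeps the total failure probability below $\delta$ while turning $\log\frac{k}{\delta_1}$ into $\Theta(\log\frac{L}{\delta})$. Each round then costs $o(\frac{k}{\epsilon^2}(\frac{n}{m}+\log\frac{L}{\delta}))$, and multiplying by the $O(L/k)$ expected rounds yields the claimed expected sample complexity $o(\frac{L}{\epsilon^2}(\frac{n}{m}+\log\frac{L}{\delta}))$ for $\mathcal{A}_3$, establishing the claim; combined with Claim~\ref{LB-P2} this is exactly the contradiction that drives the $k>1$ lower bound.

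The hard part will be the coverage analysis of the second paragraph: one must rule out the scenario in which $\mathcal{A}_2$ repeatedly concentrates its $k$ arms on a handful of \textsl{hard} sets, which would blow the round count up to the coupon-collector value $\Theta(\frac{L}{k}\log L)$ and cost a spurious $\log L$ factor. Exchangeability of the reindexed \textsl{hard} sets is precisely what forbids this, and condition \eqref{ChoiceofL} is precisely what is needed to absorb the residual $O(k\log L)$ overhead (together with the rounding slack from the floors) into the leading $O(L)$ term.
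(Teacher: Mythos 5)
Your construction follows the same skeleton as the paper's proof: merge the active \textsl{hard} sets into a single instance whose top-$M$ threshold coincides with the per-set threshold (so every $(\epsilon,M)$-optimal arm of the merge is $(\epsilon,\lfloor m/k\rfloor)$-optimal for its host set), call $\mathcal{A}_2$ repeatedly, retire each set that receives a returned arm, and use exchangeability of the reindexed sets to argue that $O(L/k)$ calls suffice in expectation, so that the per-call cost $o\bigl(\frac{k}{\epsilon^2}(\frac{n}{m}+\log\frac{L}{\delta})\bigr)$ multiplies out to the claim. Your per-round collision bound $O(k^2/L')$ summing to $O(k\log L)$, absorbed via (\ref{ChoiceofL}), is an equivalent accounting to the paper's phase-by-phase analysis (the paper ends a phase after $k$ retirements and bounds $\mathbbm{E}\tau_i^t$ per phase), and your padding of the merge back up to $k$ sets is a nice touch: it keeps $M\geq m\geq k$ in the endgame, a corner case the paper glosses over.

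The genuine gap is in your error accounting. You cap $\mathcal{A}_3$ at $O(L/k)$ rounds, set $\delta_1=\Theta(k\delta/L)$, and claim total failure probability below $\delta$. But $O(L/k)$ is only the \emph{expectation} of the round count: Markov gives merely a constant bound on the probability of exceeding the cap, and the true upper tail is genuinely exponential at constant rate per round (each of the last few unsolved sets is missed in a round with probability about $1/e$), so $\mathbbm{P}\{\text{rounds}>CL/k\}$ is of order $e^{-\Omega(L/k)}$, not $O(\delta)$. Claim~\ref{UP-A3} must hold for \emph{every} admissible $\delta$, including $\delta\ll e^{-L/k}$ — which is precisely the regime where the $\log\frac{L}{\delta}$ term in Claim~\ref{LB-P2} matters — and in that regime your truncated $\mathcal{A}_3$ leaves some \textsl{hard} set unanswered with probability far exceeding $\delta$, so it does not solve $\mathcal{P}_2$ and no contradiction is obtained. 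Two standard repairs exist. The paper's: do not cap; show that $l\log\frac{2L}{\delta}$ calls cover every set except with probability $\frac{\delta}{2}$, via $(1-\frac{1}{L})^{L\log\frac{2L}{\delta}}\leq\frac{\delta}{2L}$ and a union bound, and set the per-call error to $\frac{\delta}{2l\log\frac{2L}{\delta}}$, which still gives $\log\frac{k}{\delta_1}=O(\log\frac{L}{\delta})$; the point is that the call-count bound entering the union bound must itself grow with $\log\frac{1}{\delta}$. Alternatively: do not cap, and bound $\mathbbm{P}\{\text{some call errs}\}\leq\delta_1\,\mathbbm{E}[\#\text{calls}]$ by linearity of expectation, which is legitimate here because the coverage dynamics (which \textsl{hard} set each returned arm lies in) do not depend on whether the calls are PAC-correct. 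As written, though, the third paragraph of your argument fails.
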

\begin{proof}
	We use $\mathcal{A}_2$ to construct a new algorithm $\mathcal{A}_3$, which works as follows: 
	
	Step~1): Pick $2k$ arbitrary \textit{hard} sets of coins (indexed by $1,2,...,2k$), and form a new set $\mathcal{C}_4$ (we note that coins in different sets are always considered to be different). Let $T=\lceil 2\log\frac{2k}{\delta}\rceil$. 
	
	Step~2): Performs algorithm $\mathcal{A}_2$ on $\mathcal{C}_4$ with error probability $\frac{\delta}{2T}$, and $\mathcal{A}_2$ returns $k$ arms. We refer to these returned arms as \textit{found} arms. 
	
	Step~3): For each \textit{found} arm, \textit{tag} the \textit{hard} set it belongs to. 
	
	Step~4): If at least $k$ \textit{hard} sets have been tagged, return one \textit{found} arm for each of the first $k$ tagged \textit{hard} set. Otherwise, go to Step~2. 
	
	We will prove that $\mathcal{A}_3$ solves $\mathcal{P}_2$ with expected sample complexity $o(\frac{k}{\epsilon^2}(\frac{n}{m}+\log\frac{k}{\delta}))$. 
	
	First we prove the correctness of $\mathcal{A}_3$. We note that for each \textit{hard} set $i$, the probability that an arbitrary \textit{found} arm belongs to it is $\frac{1}{2k}$. After $T$ calls of $\mathcal{A}_2$, there are $Tk$ \textit{found} arms, and thus, the probability that \textit{hard} set $i$ is not tagged is at most 
	\begin{align}
	\left(1-\frac{1}{2k}\right)^{Tk} \leq \left(1-\frac{1}{2k}\right)^{2k\log\frac{2k}{\delta}} \leq \frac{\delta}{2k}.
	\end{align}			
	
	Thus, with probability at least $1-\frac{\delta}{2}$, \textit{hard} sets $1,2,...,k$ are tagged after $T$ calls of $\mathcal{A}_2$. When a \textit{hard} set is tagged, at least one arm of it has been found by some call of $\mathcal{A}_2$. Also, each call is erred with probability at most $\frac{\delta}{2T}$. So, with probability at least $1-\frac{\delta}{2}$, the first $T$ calls of $\mathcal{A}_2$ all return correct results. Therefore, we can conclude that with probability at least $1-\delta$, the constructed algorithm $\mathcal{A}_3$ solves $\mathcal{P}_2$ with error probability at most $\delta$.
	
	Next, we prove the sample complexity of $\mathcal{A}_3$. The calls of $\mathcal{A}_2$ return a series of arms, say $a_1,a_2,a_3,...$. Define a map $s$ such that $s(a_j)$ is the \textit{hard} set that $a_j$ belongs to. For $i\in[k]$, define $\tau_i:=\inf\{j:|\{s(a_1),s(a_2),...,s(a_j)\}|\geq i\}$, i.e., $\tau_i$ is the number of arms returned when $i$ \textit{hard} sets have been tagged. Also, let $\tau_0=0$.
	
	To calculate $\mathbb{E}\tau_i$, we observe that when there are $(i-1)$ tagged \textit{hard} sets, the probability that a new \textit{hard} set will be tagged after one more found arm is $1-\frac{i-1}{2k}$. Thus, by the property of geometric distributions, we have
	\begin{align}
	\mathbb{E}(\tau_i-\tau_{i-1}) = \frac{2k}{2k+1-k},
	\end{align}
	which implies
	\begin{align}
	\mathbb{E}\tau_k = \sum_{i=1}^k\mathbb{E}(\tau_i-\tau_{i-1})
	= \sum_{i=1}^k{\frac{2k}{2k+1-k}} \leq 2k.
	\end{align}
	
	Each call of $\mathcal{A}_2$ returns $k$ arms, and thus, after $O(1)$ expected number of calls of $\mathcal{A}_2$, $\mathcal{A}_3$ returns. Each call of $\mathcal{A}_2$ is with error probability $\frac{\delta}{2T}$ (recall $T=\lceil 2\log\frac{2k}{\delta}\rceil$). So by the definition of $\mathcal{A}_2$, each call conducts $o(\frac{k}{\epsilon^2}(\frac{n}{m}+\log\frac{2kT}{\delta})) = o(\frac{k}{\epsilon^2}(\frac{n}{m}+\log\frac{k}{\delta}))$ samples. This completes the proof of the sample complexity.
	
	The constructed algorithm $\mathcal{A}_3$ solves $\mathcal{P}_2$ with expected sample complexity $o(\frac{k}{\epsilon^2}(\frac{n}{m}+\log\frac{k}{\delta}))$. This completes the proof of Claim~\ref{UP-A3}.\end{proof}

If the $\mathcal{A}_2$ assumed in Claim~\ref{UP-A3} exists, it will lead to a contradiction to Claim~\ref{LB-P2}. This completes the proof of Theorem~\ref{LB-Q-FK}. $\square$.

\section{PROOF OF THEOREM~\ref{TP-AL-Q-IK}}
\textsl{Proof.}	Let $k\in\mathbb{Z}^+$, $\rho,\epsilon,\delta\in(0,\frac{1}{2})$, $\lambda\leq\lambda_\rho$ be given. For $p,x\in(0,1)$, we define $U_p:=\{a\in\mathcal{S}:\mu_a\geq\lambda_p\}$, $E_x:=\{a\in\mathcal{S}:\mu_a\geq\lambda-x\}$, and $F_x:=\mathcal{S}-E_x=\{a\in\mathcal{S}:\mu_a<\lambda-x\}$.	

By (\ref{ICDF1}), an arm randomly drawn from $\mathcal{S}$ is in $U_\rho$ with probability at least $p$. In the $t$-th repetition, by the choice of $n_1$ in AL-Q-IK, we have
\begin{align}
\mathbb{P}\{|A_t\cap U_\rho| = 0 \} & \leq (1-\rho)^{n_1} \nonumber \\ 
& = e^{n_1 \log(1-\rho)} \leq e^{-n_1 \rho} \leq \frac{1}{3}. \label{UrhoCovered} 
\end{align}
Given the condition $|A_t\cap U_\rho| > 0$, since $a_t$ is the returned value of Median-Elimination$(A_t, \epsilon_1, \frac{1}{4})$, by Theorem 4 \citep{MedianElimination2002}, $a_t$ is with probability at least $\frac{3}{4}$ in $E_{\epsilon_1}$. Thus, we can conclude that 
\begin{equation}\label{TopArmCovered}
\mathbb{P}\{a_t\in E_{\epsilon_1}\}\geq (1-\frac{1}{3})\frac{3}{4} = \frac{1}{2}.
\end{equation}
In Line~4, we sample $a_t$ for $n_2$ times, and its empirical mean is $\hat{\mu}_t$. Define $\mathcal{E}_t:=$ the event that $a_t$ is included in the returned value $Ans$. Since $\mathcal{E}_t$ happens if and only if $\hat{\mu}_t \geq \lambda - \epsilon_1-\epsilon_2$, by Hoeffding's Inequality and $n_2=\lceil\frac{1}{2\epsilon_2^2}\log\frac{k}{\delta}\rceil$, it holds that 
\begin{gather} 
\mathbb{P}\left\{\mathcal{E}_t^\complement \mid a_t\in E_{\epsilon_1} \right\} \leq \exp\left\{-2n_2\left(\epsilon_2^2\right)\right\}\leq \frac{\delta}{k},\label{PGoodAdded} \\
\mathbb{P}\left\{\mathcal{E}_t \mid a_t\in F_{\epsilon} \right\} \leq \exp\left\{-2n_2\left(\epsilon_2^2\right)\right\} \leq \frac{\delta}{k}.\label{PBadAdded}
\end{gather}
Since $\{a_t\in E_{\epsilon_1}\} \cap \{\hat{\mu}_t \geq \lambda - \epsilon_1-\epsilon_2\} \subset \mathcal{E}_t$, by (\ref{TopArmCovered}) and (\ref{PGoodAdded}), we have 
\begin{equation}\label{Padd}
\mathbb{P}\{\mathcal{E}_t\} \geq \frac{1}{2}(1-\frac{\delta}{k})\geq \frac{1}{4}. 
\end{equation} 
Besides, by (\ref{TopArmCovered}), (\ref{PGoodAdded}), and (\ref{PBadAdded}), we have
\begin{align} 
\frac{\mathbb{P}\left\{a_t\in E_{\epsilon} \mid \mathcal{E}_t \right\}}{\mathbb{P}\left\{a_t\in F_{\epsilon} \mid \mathcal{E}_t \right\}} 
\geq \frac{\mathbb{P}\left\{a_t\in E_{\epsilon_1} \mid \mathcal{E}_t \right\}}{\mathbb{P}\left\{a_t\in F_{\epsilon} \mid \mathcal{E}_t \right\}} \nonumber\\
= \frac{\mathbb{P}\left\{a_t\in E_{\epsilon_1}\right\}\mathbb{P}\left\{\mathcal{E}_t \mid a_t\in E_{\epsilon_1}\right\}}{\mathbb{P}\left\{a_t\in F_{\epsilon}\right\}\mathbb{P}\left\{\mathcal{E}_t \mid a_t\in F_{\epsilon}\right\}} \nonumber\\ 
\geq \frac{\frac{1}{2}\cdot(1-\frac{\delta}{k})}{\frac{1}{2}\cdot\frac{\delta}{k}} = \frac{k}{\delta}-1.
\end{align} 
Since $\mathbb{P}\{a_t\in E_{\epsilon} \mid \mathcal{E}_t\}+\mathbb{P}\{a_t\in F_{\epsilon} \mid \mathcal{E}_t\}=1$, we can conclude that 
\begin{equation} 
\mathbb{P}\left\{a_t\in E_{\epsilon} \mid \mathcal{E}_t \right\} \geq 1-\frac{\delta}{k}. 
\end{equation} 
This shows that when an arm $a_t$ is added to $Ans$, with probability at least $1-\frac{\delta}{k}$, $a_t$ is in $E_\epsilon$. Thus, we have 
\begin{equation} 
\mathbb{P}\{\forall a_t\in Ans , a_t\in E_{\epsilon}\} \geq 1-\delta. 
\end{equation}
Thus, the returned arms of AL-Q-IK all have expected rewards no less than $\lambda-\epsilon$ with probability at least $1-\delta$. This completes the proof of correctness.

It remains to derive the sample complexity. In each repetition, the algorithm calls Median-Elimination$(A_t, \epsilon_1, \frac{1}{4})$ for once, and samples $a_t$ for $n_2$ times. Each call of Median-Elimination takes at most $O(\frac{n_1}{\epsilon^2})=O(\frac{1}{\rho\epsilon^2})$ samples \citep{MedianElimination2002}, and $n_2=O(\frac{1}{\epsilon^2}\log\frac{k}{\delta})$. Thus, each repetition takes $O(\frac{1}{\epsilon^2}(\frac{1}{\rho}+\log\frac{k}{\delta}))$ samples. By (\ref{Padd}), in each repetition, with probability at least $\frac{1}{4}$, one arm is added to $Ans$, and the algorithm terminates after $k$ arms are added to $Ans$. Obviously, after at most $4k$ repetitions in expectation, the algorithm returns. Thus, the expected sample complexity is $O(\frac{k}{\epsilon^2}(\frac{1}{\rho}+\log\frac{k}{\delta}))$. This completes the proof. $\square$

\section{PROOF OF LEMMA~\ref{Correctness-PACMaxing}}
\textit{Proof.}
Let $a_r$ be the returned arm. For any arm $a$ in $\mathcal{S}$, define
\begin{align}
\mathcal{E}^N_a\!:=\!\{\exists t,N^t(a)=N,\mu_a\!<\!L^t(a) \lor \mu_a\!>\!U^t(a)\},
\end{align}
i.e., the event that when $N^t(a)=N$, $\mu_a$ is not within the interval $[L^t(a),U^t(a)]$. Define the bad event $\mathcal{E}_{out}:=\bigcup_{a,N}\mathcal{E}^N_a$. By (\ref{UpperConfidenceBound}) and (\ref{LowerConfidenceBound}), we have that 
\begin{align}
\mathbb{P}\left\{\mathcal{E}^N_a\right\} \leq 2\delta^N.
\end{align}
Thus, by $k_1\geq 2\sum_t{t^\gamma}$ and the union bound, we have that \begin{align}
\mathbb{P}\{\mathcal{E}_{out}\}\leq \sum_{a,N}{\mathbb{P}\left\{\mathcal{E}^N_a\right\}}\leq n\sum_{N=1}^\infty{2\delta^N}\leq \delta.
\end{align}
Since $budget$ is large enough, the algorithm returns at Line~15. Let $t_0$ be the index of the iteration when the algorithm returns. By the return condition of Line~15, we have that for all $a\neq a_r$, $U^{t_0}(a)\leq L^{t_0}(a_r)+\epsilon$. By the definition of $\mathcal{E}_{out}$, when it does not happen, for all arms $a$, $\mu_a\in [L^t(a),U^t(a)]$ for all $t$, implying that 
\begin{align}
\mu_a\leq U^{t_0}(a) \leq L^{t_0}(a_r)+\epsilon\leq \mu_{a_r}+\epsilon. 
\end{align}
Thus, the returned arm $a_r$ is $(\epsilon,1)$-optimal with probability at least $1-\delta$. 
$\square$

\section{PROOF OF LEMMA~\ref{SampleComplexityPACMaxing}}
\textsl{Proof.}
In the proof, we assume that $\mathcal{E}_{out}$ does not happen. This event is defined in the proof of Lemma~\ref{Correctness-PACMaxing}, and does not happen with probability at least $1-\delta$. 

Let $\tau$ be the number of samples taken till termination. Define the set $T:=\{n+2i:i\in\mathbb{N},n+2i<\tau\}$. $T$ is the set of $t$ such that $a^t$ and $b^t$ are computed. For each arm $a$, define $X_a:=\sum_{t\in T}{\mathds{1}_{b^t=a}}$, the number of times that $b^t$ is $a$. Define $\mu^*:=\max_{a\in A}\mu_a$, $\Delta'_a:=\mu^*-\mu_a$, and $\Delta_a:=\frac{1}{2}\max\{{\epsilon},\Delta'_a\}$. Now, we are going to bound $X_a$.

Let $a$ be an arbitrary arm in $A$. Assume that at some time $t\in T$, 
\begin{align}\label{NValue}
N^t(a)\!\geq\!\frac{1}{\Delta_a^2}\!\max\!\left\{\!\log\frac{k_1n}{\delta},(\gamma\!+\!\frac{\gamma}{e})\log\!\frac{(\gamma\!+\!\frac{\gamma}{e})}{\Delta_a^2}\!\right\},
\end{align}
and we will show that either $b^t$ does not equal to $a$ or the algorithm returns before the next sample. 

Let $x=\frac{\gamma}{\Delta_a^2}$ ($x>4$ as $\Delta_a\leq\frac{1}{2}$ and $\gamma>1$). Since $N^t(a)\geq (1+e^{-1})x \log((1+e^{-1})x)>4$, we have that
\begin{align}
\frac{N^t(a)}{\log{N^t(a)}}&\stackrel{(i)}{>} \frac{(1+e^{-1})x \log((1+e^{-1})x)}{\log((1+e^{-1})x)+\log\log((1+e^{-1})x)}\nonumber\\
&=\frac{(1+e^{-1})x}{1+\frac{\log\log((1+e^{-1})x)}{\log((1+e^{-1})x)}}\stackrel{(ii)}{\geq} x,
\end{align}
where (i) is because $\frac{y}{\log{y}}$ is increasing for $y\geq e$, and (ii) is because $\frac{\log{y}}{y}\leq \frac{1}{e}$. It implies that 
\begin{align}\label{NBound1}
\frac{1}{2}N^t(a)> \frac{\gamma}{2\Delta_a^2}\log{N^t(a)}.
\end{align}
Also, by (\ref{NValue}) we have that 
\begin{align}\label{NBound2}
\frac{1}{2}N^t(a)\geq \frac{1}{2\Delta_a^2}\log\frac{k_1n}{\delta}.
\end{align}
Thus, adding (\ref{NBound1}) and (\ref{NBound2}), we have that 
\begin{align} 
N^t(a)> \frac{1}{2\Delta_a^2}\log\frac{k_1n(N^t(a))^\gamma}{\delta}.
\end{align}
It follows that 
\begin{align}
\sqrt{\frac{1}{2N^t(a)}\log\frac{k_1n (N^t(a))^\gamma}{\delta}} < \Delta_a.
\end{align}
Recall that in the algorithm, for arm $a$, we define $U^t(a):=u(\hat{\mu}^t(a),N^t(a),\delta^{N^t(a)})$ and $L^t(a):=l(\hat{\mu}^t(a),N^t(a),\delta^{N^t(a)})$ as ((\ref{HodffdingUpper}) and (\ref{HoefdingLower})). By the choice of $\delta^{N^t(a)}=\frac{\delta}{k_1n (N^t(a))^\gamma}$ in PACMaxing, and the choice of confidence bounds, we have that
\begin{gather}
U^t(a)-\hat{\mu}^t(a)=\hat{\mu}^t(a)-L^t(a)< \Delta_a,\\
U^t(a)-L^t(a)< 2\Delta_a.\label{BoundedDifference}
\end{gather}

Now, for this $a$, we will show that either the algorithm returns before the next sample or $b^t\neq a$. 

First we consider the case where $\Delta_a=\frac{\epsilon}{2}$. In this case, we assume $b^t=a$, and show that the algorithm will return before the next sample. Recall that we assume $\mathcal{E}_{out}$ does not happen. This means for any $t$ and arm $b\in A$, $\mu_b$ is in $[L^t(b),U^t(b)]$. Since $b^t=a$ and $b^t:=\arg\max_{b\in A}U^t(b)$, for all arms $b\neq a$, $U^t(a)\geq U^t(b)$. By (\ref{BoundedDifference}), $L^t(a)\geq U^t(a)-\epsilon\geq U^t(b)-\epsilon$. This means that the algorithm returns arm $a$ before the next sample as we already have $B(t)\leq \epsilon$.

Next, we consider the case where $\Delta_a=\frac{\Delta'_a}{2}$. Let $a^*$ be the most rewarding arm of $A$ (i.e., the arm with the largest mean reward). As $\Delta'_{a^*}=0<\epsilon$, $a$ is not $a^*$. Since $\mathcal{E}_{out}$ does not happen, by the definition of $\mathcal{E}_{out}$ and (\ref{BoundedDifference}), we have that $U^t(a)<L^t(a)+\Delta'_a\leq \mu_a+\Delta'_a\leq \mu^*\leq U^t(a^*)$, implying $b^t\neq a$.

Thus, we can conclude that when $\mathcal{E}_{out}$ does not happen,
\begin{align}
X_a\!\leq\!1\!+\!\frac{1}{\Delta_a^2}\!\max\!\left\{\!\log\frac{k_1n}{\delta},(\gamma\!+\!\frac{\gamma}{e})\log\!\frac{(\gamma\!+\!\frac{\gamma}{e})}{\Delta_a^2}\!\right\}.
\end{align}
Except the first $n$ samples, there is one $b^t$ sampled out of every two consecutive samples. Thus, with probability at least $1-\delta$, the number of samples taken before termination is at most 
\begin{align}\label{PACMaxingUpperBound}
&n+2\sum_{a\in A}X_a \nonumber\\ 
\leq &3n\!+\!\sum_{a\in A}\!\frac{2}{\Delta_a^2}\!\max\!\left\{\!\log\!\frac{k_1n}{\delta},(\gamma\!+\!\frac{\gamma}{e})\log\!\frac{(\gamma\!+\!\frac{\gamma}{e})}{\Delta_a^2}\!\right\}.
\end{align}
The desired sample complexity follows.

Since $\Delta_a\leq \frac{\epsilon}{2}$, the $budget$ value stated in this lemma is no less than that in (\ref{PACMaxingUpperBound}). This completes the proof.
$\square$

\section{PROOF OF LEMMA~\ref{TPLE}}
\textsl{Proof.}	The first step is to prove that with probability at least $1-\frac{2\delta}{5}$, the $m$-th most rewarding arm of $A_1$ is in $M:=\{a\in\mathcal{S}:\lambda_\rho\leq\mu_a\leq\lambda_{\rho/2}\}$. Here, we recall that $m:=\lfloor 1 + \frac{3}{4}\rho n_3\rfloor$ as defined in LambdaEstimation. To do it, we need to introduce an inequality directly derived from Chernoff Bound. Let $X^1,X^2,...,X^t$ be $t$ independent Bernoulli random variables, and for all $i$, $\mathbb{E}X^i\geq p$. Define $S:=\sum_{i=1}^t{X^i}$. Let $B(t,p)$ denote a Binomial random variable with parameters $t$ and $p$. For any $b \leq tp$, we have $\mathbb{P}\{S \leq b\} \leq \mathbb{P}\{B(t,p)\leq b\}$, and thus, by Chernoff Bound, 
\begin{equation}\label{ChernoffBound}
\mathbb{P}\{S \leq b\} \leq \exp\left\{-\frac{t}{2p}\left(p-\frac{b}{t}\right)^2\right\}.
\end{equation}

In this paper, we use $a\sim\mathcal{S}$ to denote that $a$ is randomly drawn from $\mathcal{S}$. By (\ref{ICDF1}) and (\ref{ICDF2}), we have
\begin{gather}
\mathbb{P}_{a\sim\mathcal{S}}\{a\in S_1\}\leq \frac{\rho}{2},\label{PS1}\\
\mathbb{P}_{a\sim\mathcal{S}}\{a\in S_2\}\geq \rho.\label{PS2}
\end{gather}

By the works of \citet{BinomialKLBound1989}, we have that for $x>tp$, 
\begin{align}\label{KLBinomialTail}
\mathbb{P}\left\{B(t,p)\geq x\right\}\leq \exp\left\{-tD_{KL}\left(\frac{x}{t}\mid\mid p\right)\right\},
\end{align}
where $D_{KL}(p||q):=p\log\frac{p}{q}+(1-p)\log\frac{1-p}{1-q}$. 

Define two sets 
\begin{gather}
S_1:=\{a\in A_1:\mu_a>\lambda_{\rho/2}\},\\
S_2:=\{a\in A_1:\mu_a\geq\lambda_\rho\}.
\end{gather}

By Inequalities~(\ref{PS1}) and (\ref{KLBinomialTail}), we have
\begin{align} \label{S1Size}
& \mathbb{P}\left\{|S_1|\geq \frac{3}{4}\rho n_3\right\} \leq \exp\left\{-n_3D_{KL}\left(\frac{3}{4}\rho\mid\mid \frac{1}{2}\rho\right)\right\}\nonumber \\
& = \exp\left\{\!-n_3\!\left[\frac{3}{4}\rho\log\frac{3}{2}\!-\!\left(\!1\!-\!\frac{3}{4}\rho\right)\!\log\left(1\!+\!\frac{\frac{\rho}{4}}{1\!-\!\frac{3\rho}{4}}\right)\right]\right\}\nonumber \\
& \leq \exp\left\{-n_3\rho\left[\frac{3}{4}\log\frac{3}{2}-\frac{1}{4}\right]\right\} \leq \frac{\delta}{5}, 
\end{align}

Also, by (\ref{ChernoffBound}) and (\ref{PS2}), it holds that 
\begin{equation} \label{S2Size}
\mathbb{P}\left\{|S_2|\leq \frac{3}{4}\rho n_3\right\} \leq \exp\left\{-\frac{n_3}{2\rho}\left(\frac{1}{4}\rho\right)^2\right\} \leq \frac{\delta}{5}. 
\end{equation} 
The above two statement (\ref{S1Size}) and (\ref{S2Size}) implies that with probability at least $1-\frac{2\delta}{5}$, $|S_1|<\frac{3}{4}\rho n_3$ and $|S_2|>\frac{3}{4}\rho n_3$. Recalling that $m = \lfloor 1 + \frac{3}{4}\rho n_3 \rfloor$, the $m$-th most rewarding arm of $A_1$ is in $M$ with probability at least $1-\frac{2\delta}{5}$.

The second step is to prove that $\mu_{\hat{a}}$ is in $[\lambda_\rho-\epsilon_1, \lambda_{\rho/2}+\epsilon_2]$ with probability at least $1-\frac{4\delta}{5}$. The call of Halving$(A_1, m, \epsilon_1, \frac{\delta}{5})$ returns an $m$-sized set of arms $A_2$, and with probability at least $1-\frac{\delta}{5}$, every arm $a$ in it has $\mu_a\geq \lambda'_{[m]}-\epsilon_1$ \citep{Halving2010}, where $\lambda'_{[m]}$ is the mean reward of the $m$-th most rewarding arm of $A_1$. We note that with probability at least $1-\frac{2\delta}{5}$, the $m$-th most rewarding arm of $A_1$ is in $M$, implying $\lambda'_{[m]}\geq \lambda_{\rho}$. Thus, we have
\begin{align}\label{H1C}
\mathbb{P}\left\{A_2\subset E_{\epsilon_1}\middle| |S_1|<\frac{3}{4}\rho n_3 < |S_2|\right\}\geq 1-\frac{\delta}{5}
\end{align}
Besides, by (\ref{S1Size}) and $|A_2|=m\geq \frac{3}{4}\rho n_3$, at least one arm $a^w$ of $A_2$ is in $M$ given that $|S_1|<\frac{3}{4}\rho n_3$. The call of Halving$_2(A_3, 1, \epsilon_2, \frac{\delta}{5})$ returns an arm $\hat{a}$ of $A_2$ having $\mu_{\hat{a}}\leq \mu_{a^w}+\epsilon_2 \leq \lambda_{\rho/2}+\epsilon_2$ with probability at least $1-\frac{\delta}{5}$ \citep{Halving2010} given that $|S_1|<\frac{3}{4}\rho n_3$, i.e.,
\begin{align}\label{H2C}
\mathbb{P}\left\{\mu_{\hat{a}}\leq \lambda_{\rho/2}+\epsilon_2 \middle| |S_1|<\frac{3}{4}\rho n_3 \right\}\geq 1-\frac{\delta}{5}.
\end{align}
It follows from $\hat{a}\in A_2$, the definition of $E_{\epsilon_1}$, (\ref{S1Size}), (\ref{S2Size}), (\ref{H1C}), and (\ref{H2C}) that
\begin{align}\label{Step2}
\mathbb{P}\left\{\mu_{\hat{a}}\in \left[\lambda_\rho-\epsilon_1, \lambda_{\rho/2}+\epsilon_2\right]\right\} \geq 1-\frac{4\delta}{5}.
\end{align}

The third step is to prove that $\hat{\lambda}$ is in $[\lambda_\rho-\epsilon, \lambda_{\rho/2}]$ with probability at least $1-\delta$. Since $\hat{a}$ is sampled for $n_4$ times, by (\ref{Step2}) and Hoeffding's Inequality, we have
\begin{align}
&\mathbb{P}\left\{\hat{\lambda}\notin\left[\lambda_\rho-\epsilon, \lambda_{\frac{\rho}{2}}\right]\right\}\nonumber\\
= & \mathbb{P}\left\{\hat{\mu}\notin\left[\lambda_\rho-\epsilon_1-\epsilon_3, \lambda_{\frac{\rho}{2}}+\epsilon_2+\epsilon_3\right]\right\}\nonumber\\
\leq & \mathbb{P}\left\{\mu_{\hat{a}}\notin\left[\lambda_\rho-\epsilon_1, \lambda_{\frac{\rho}{2}}+\epsilon_2\right]\right\} + \mathbb{P}\left\{\left|\hat{\mu}-\mu_{\hat{a}}\right|\geq\epsilon_3\right\}\nonumber\\
\leq & \frac{4\delta}{5} + 2\exp\left\{-2n_4\epsilon_3^2\right\} \leq \frac{4\delta}{5}+\frac{\delta}{5}\leq \delta.
\end{align}
This completes the proof of correctness.

It remains to prove the sample complexity. Line~4 uses $O(\frac{n_3}{\epsilon^2}\log\frac{m}{\delta})=O(\frac{1}{\rho\epsilon^2}\log^2\frac{1}{\delta})$ samples \citep{Halving2010}, Line~5 uses $O(\frac{m}{\epsilon^2}\log\frac{1}{\delta})=O(\frac{1}{\epsilon^2}\log^2\frac{1}{\delta})$ samples, and Line~6 uses $n_4=O(\frac{1}{\epsilon^2}\log\frac{1}{\delta})$ samples. The desired results follows by summing these three upper bounds up. $\square$

\section{PROOF OF THEOREM~\ref{TP-AL-Q-FK}}

\textit{Proof.}
Each call of AL-Q-IK is wrong with probability at most $\frac{\delta}{k}$. The correctness follows. 

By Theorem~\ref{TP-AL-Q-IK}, the $t$-th repetition uses $O(\frac{1}{\epsilon^2}(\frac{n+1-t}{m+1-t}+\log\frac{k}{\delta}))$ samples in expectation. For all $x\in(0,1]$, we have $\frac{\log(1+x)}{x}\geq \log{2}$. It implies 
\begin{align}
\log\frac{m+2-t}{m+1-t} = & \log\left(1+\frac{1}{m+1-t}\right)\nonumber\\
\geq & \frac{\log{2}}{m+1-t},
\end{align}
and thus, 
\begin{align}
&\sum_{t=1}^k\left\{\frac{1}{\epsilon^2}\left(\frac{n+1-t}{m+1-t}+\log\frac{k}{\delta}\right)\right\}\nonumber\\ 
\leq & \frac{n}{\epsilon^2\log{2}}\sum_{t=1}^k\log\frac{m+2-t}{m+1-t} + \frac{k}{\epsilon^2}\log\frac{k}{\delta}\nonumber\\
\leq & \frac{n}{\epsilon^2\log{2}}\log\frac{m+1}{m+1-k} + \frac{k}{\epsilon^2}\log\frac{k}{\delta}.
\end{align}
The desired sample complexity follows.
$\square$

\section{ADDITIONAL NUMERICAL RESULTS}
First, we compare the pure exploration algorithms in the finite cases to demonstrate that by adopting the QE setting, the number of samples taken can be greatly reduced compared with the KE setting. 	
Other comparisons on the finite-armed algorithms are omitted as their performance is similar to their infinite-armed versions, especially when $n$ is large. Also, when $k=1$, their performance are almost the same. 

\begin{wrapfigure}{r}{0.23\textwidth}
	\includegraphics[scale=0.5]{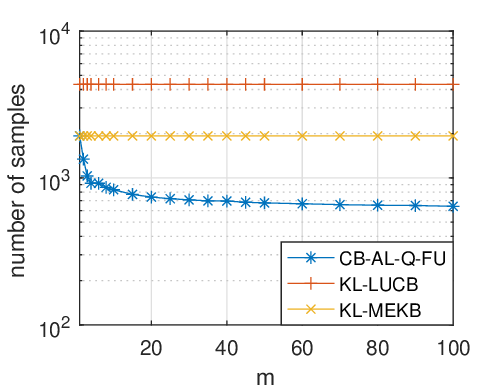}
	\caption{Comparison of the finite-armed pure exploration algorithms. $n=1000$, $k=1$, $\epsilon=0.05$, and $\delta=0.001$.}\label{fig:finite}
\end{wrapfigure}
The algorithms compared include CB-AL-Q-FK (CBB version of AL-Q-FK by replacing the subroutines with CBB ones, and in CB-AL-Q-IK, we choose $\epsilon_1=0.75\epsilon$), KL-LUCB for the finite case \citep{KLLUCB2013}, and MEKB \citep{FKLowerBound2004}. Here, we modify MEKB to the CBB version KL-MEKB. All the algorithms use the same confidence bounds given by \citet{KLLUCB2013} with $\gamma=2$. The results are summarized in Figure~\ref{fig:finite}. KL-LUCB and MEKB were designed to find one $(\epsilon, 1)$-optimal arm from a finite set. MEKB has the prior knowledge of $\lambda_{[1]}$, and can be regarded as the $m=1$ version of AL-Q-FK. There are totally 1000 arms. For each arm, its rewards follow the Bernoulli distribution, and its expected reward is generated by taking an independent instance of the Uniform([0,1]) distribution. All algorithms are tested on the same dataset. Every point is averaged over 100 independent trials.


Here, we note that the KE algorithms KL-LUCB and MEKB were designed to find an $(\epsilon,1)$-optimal arm, so their performance are independent of $m$.

According to Figure~\ref{fig:finite}, the two algorithms CB-AL-Q-FK and KL-MEKB that have knowledge of $\lambda_{[m]}$ or $\lambda_{[1]}$ perform better than KL-LUCB, the one without the knowledge, consistent with the theory. When $m=1$, the performance of CB-AL-Q-IK and KL-MEKB are close. However, when $m>1$, CB-AL-Q-IK takes less samples, and the gaps increases as $m$. The reason lies in that (CB-)AL-Q-IK's sample complexity depends on $\frac{n}{m}$ while (KL-)MEKB's depends on $n$. Thus, the numerical results indicate that by adopting the QE setting, one can find "good" enough arms by much less samples.

\begin{figure}[bht]\centering
	\begin{subfigure}[b]{0.23\textwidth}
		\includegraphics[scale=0.5]{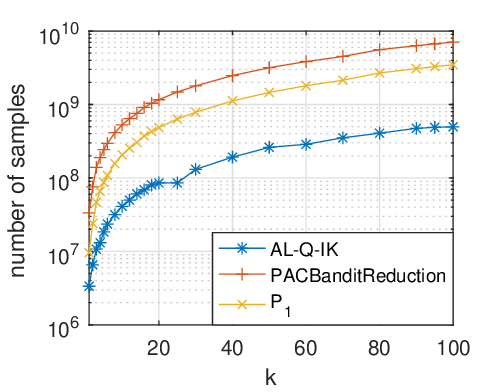}
		\caption{Vary $k$, $\rho=0.05$,$\epsilon=0.1$, and $\delta=0.01$.}
	\end{subfigure}
	\begin{subfigure}[b]{0.23\textwidth}
		\includegraphics[scale=0.5]{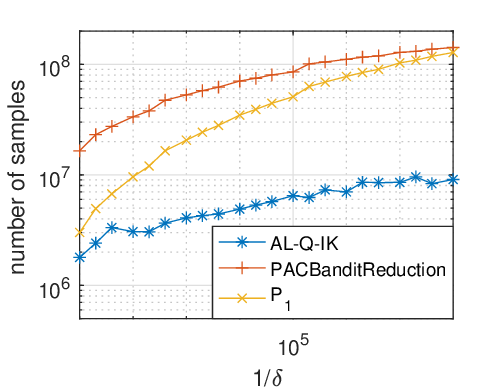}
		\caption{Vary $\delta$, $k=1$, $\rho=0.05$, and $\epsilon=0.1$.}
	\end{subfigure}\\
	\begin{subfigure}[b]{0.23\textwidth}
		\includegraphics[scale=0.5]{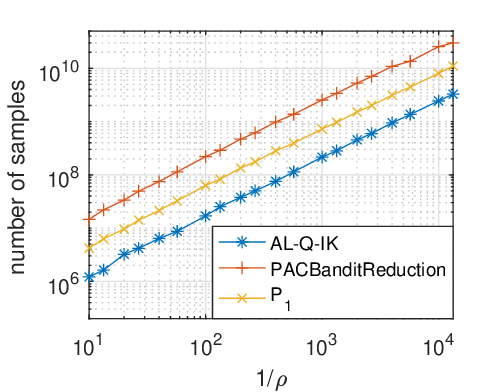}
		\caption{Vary $\rho$, $k=1$, $\epsilon=0.1$, and $\delta=0.01$.}
	\end{subfigure}
	\begin{subfigure}[b]{0.23\textwidth}
		\includegraphics[scale=0.5]{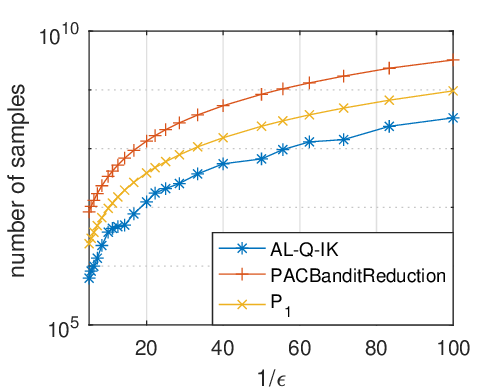}
		\caption{Vary $\epsilon$, $k=1$, $\rho=0.05$, and $\delta=0.01$.}
	\end{subfigure}
	\caption{Comparison of Non-CBB Algorithms.}\label{fig:Q-IKComparison}
\end{figure}

Next, we compare non-CBB algorithms: AL-Q-IK (Choosing $\epsilon_1=0.8\epsilon$), PACBanditReduction \citep{Q-IK2013}, and $\mathcal{P}_1$ \citep{Q-IU2017}. Here, again, we note that $\mathcal{P}_1$ does not require the knowledge of $\lambda_\rho$, but we want to illustrate how our algorithm along with this knowledge can improve the efficiency. The results are summarized in Figure~\ref{fig:Q-IKComparison} (a)-(d). In the simulations, the prior $\mathcal{F}$ is always Uniform([0,1]), and every point of every figure is averaged over 100 independent trials. 

The theoretical sample complexities of these three algorithms are: AL-Q-IK, $O(\frac{k}{\epsilon^2}(\frac{1}{\rho}+\log\frac{k}{\delta}))$; PACBanditReduction, $O(\frac{k}{\rho\epsilon^2}\log\frac{k}{\delta})$; $\mathcal{P}_1$, $O(\frac{k}{\rho\epsilon^2}\log^2\frac{k}{\delta})$. The numerical results confirm that AL-Q-IK performs better than the other two significantly. Figure~\ref{fig:Q-IKComparison}~(b) shows that AL-Q-IK's sample complexity increases slowly with $\frac{1}{\delta}$, consistent with the theory and numerical results on CB-AL-Q-IK.

According to Figure~\ref{fig:KLComparison} (c) and Figure~\ref{fig:Q-IKComparison} (c), the CB-AL-Q-IK's number of samples increases super-linearly with $\frac{1}{\rho}$ while that of AL-Q-IK increases linearly, consistent with the theory that the former depends on $\frac{1}{\rho}\log\frac{1}{\rho}$ while the latter depends on $\frac{1}{\rho}$. When $\frac{1}{\rho}$ is large enough, asymptotically AL-Q-IK will outperform CB-AL-Q-IK. However, in practice, under such small $\rho$ values, the sample complexity of both algorithms will be extremely large.

\end{document}